\newcommand\Tr{T_{\textnormal{round}}}
\newcommand\Tb{T_{\textnormal{batch}}}
\newcommand\D{\mathcal{D}}
\newcommand\E{\mathbb{E}}
\newcommand\R{\mathbb{R}}
\newcommand\Reg{\mathcal{R}}
\newcommand\reg{\mathcal{R}}
\newcommand\hatreg{\widehat{\mathcal{R}}}
\newcommand\hatf{\widehat{f}}
\newcommand\tilf{\widetilde{f}}
\newcommand\x{\boldsymbol{x}}
\newcommand\cc{\boldsymbol{c}}
\newcommand\hatx{\widehat{\boldsymbol{x}}}
\newcommand\y{\boldsymbol{y}}
\newcommand\g{\boldsymbol{g}}
\newcommand\gtil{\boldsymbol{\widetilde{g}}}
\newcommand\tilg{\boldsymbol{\widetilde{g}}}
\newcommand\s{\boldsymbol{s}}
\newcommand\stil{\boldsymbol{\widetilde{s}}}
\newcommand\tils{\boldsymbol{\widetilde{s}}}
\newcommand\vv{\boldsymbol{v}}
\newcommand\uu{\boldsymbol{u}}
\newcommand\xtil{\boldsymbol{\widetilde{x}}}
\newcommand\Dist{\mathcal{P}}
\newcommand\Z{\boldsymbol{Z}}
\newcommand\mI{\boldsymbol{I}}
\newcommand\ones{\boldsymbol{1}}
\newcommand\calA{\mathcal{A}}
\renewcommand\epsilon{\varepsilon}
\newcommand\eps{\varepsilon}
\newcommand\epscg{\varepsilon_{\textnormal{cg}}}
\newcommand\ndel{\zeta} 
\numberwithin{equation}{section}
\numberwithin{figure}{section}
\theoremstyle{plain}
	\newtheorem{theorem}{Theorem}[section]
	\newtheorem{thm}[theorem]{Theorem}
	\newtheorem{lem}[theorem]{Lemma}
 	\newtheorem{cor}[theorem]{Corollary}
\theoremstyle{definition}
	\newtheorem{definition}[theorem]{Definition}
	\newtheorem*{remark*}{Remark}
\begin{document}

\clubpenalty=10000
\widowpenalty = 10000

\title{Projection-Free Bandit Optimization with Privacy Guarantees}
\author {
Alina Ene\thanks{Department of Computer Science, Boston University, \tt{aene@bu.edu}}
        \and
        Huy L. Nguy\~{\^{e}}n\thanks{Khoury College of Computer and Information Science, Northeastern University, \tt{hu.nguyen@northeastern.edu}} 
        \and
        Adrian Vladu\thanks{CNRS \& IRIF, Universit\'{e} de Paris, \tt{vladu@irif.fr}}
}

\date{}
\maketitle

\begin{abstract}
We design differentially private algorithms for the bandit convex optimization problem in the projection-free setting. 
This setting is important whenever the decision set has a complex geometry, and access to it is done efficiently only through a linear optimization oracle, hence Euclidean projections are unavailable (e.g. matroid polytope, submodular base polytope).
This is the first differentially-private algorithm for projection-free bandit optimization, and in fact our bound of $\widetilde{O}(T^{3/4})$ matches the best known non-private projection-free algorithm (Garber-Kretzu, AISTATS `20) and the best known private algorithm, even for the weaker setting when projections are available (Smith-Thakurta, NeurIPS `13).
\end{abstract}

\section{Introduction}
Online learning is a fundamental optimization paradigm employed in settings where one needs to make decisions in an uncertain environment. Such methods are essential for a range of practical applications: ad-serving~\citep{mcmahan2013ad}, dynamic pricing~\citep{LobelLV17,mao2018contextual},
or recommender systems~\citep{abernethy2007online} are only a few examples.
These techniques are highly dependent on access to certain user data, such as search history, list of contacts, etc. which may expose sensitive information about a particular person. 

As these tools become ubiquitous on the internet, one can witness a surge in the
collection of user data at massive scales. This is a tremendous problem, since 
by obtaining information about the behavior of algorithms run on these data,
adversarial entities may learn potentially sensitive information; this could then be traced to a particular user, even if the users were anonymized to begin with~\citep{dwork2014algorithmic}.

To mitigate the threat of diminishing user privacy, one can leverage the power of differential privacy~\citep{dwork2006calibrating}, a notion of privacy which ensures that the 
output of an algorithm is not sensitive to the presence of a particular user's data. Therefore, based on this output, one can not determine whether a user presents one or more given attributes.

Differentially private learning algorithms have been studied in several settings, and a large number of recent works addressed the challenge of designing general optimization primitives with privacy guarantees~\citep{jain2012differentially, agarwal2017price, bassily2014private, bassily2014private2, abadi2016deep, wang2017differentially, iyengar2019towards}.
In this paper, we further advance this line of research by offering differentially private algorithms for a very general task -- the bandit convex optimization problem in the case where the space of decisions that the learning algorithm can make exhibits complex geometry.

Bandit convex optimization is an extremely general framework for online learning, which is motivated by the natural setting where, after making a decision the algorithm only learns the loss associated with its action, and nothing about other possible decisions it could have made (as opposed to the weaker \textit{full information} model where losses associated to all the possible decisions are revealed). Algorithms for this problem are highly dependent on the geometric properties of the space of decisions -- and their performance usually depends on the ability to perform certain projections onto this space~\citep{ben2001lectures,jaggi2013revisiting}. For large scale problems, this requirement may be prohibitive, as decisions may have to satisfy certain constraints (the set of recommendations must be diverse enough, or the set of ads to be displayed satisfy a given budget). Projection-free methods overcome this issue by exploiting the fact that some canonical decision sets often encountered in applications (matroid polytope, submodular base polytope, flow polytope, convex relaxations of low-rank matrices) have efficient linear optimization oracles. One can therefore use these efficient oracles in conjunction with the projection-free method to obtain algorithms that can be deployed for real-world applications.

In this work we bridge the requirements of privacy and efficiency for online learning, building on the works of~\citep{garber2019improved, garber2013playing} to obtain the first differentially private algorithm for projection-free bandit optimization.
To do so we leverage a generic framework for online convex optimization in the presence of noise, which we then adapt to our specific setting in a modular fashion.

\medskip
{\bf Our Contributions.}
We give the first differentially private algorithm for the bandit convex optimization problem in the projection-free setting (we defer the definition of $(\epsilon,\delta)$-privacy to Definition~\ref{def:dp} and the problem statement to Section~\ref{sec:prelim}). We summarize the regret guarantees of our algorithm in the following theorem and compare it with the state of the art guarantees in the private and non-private settings. Our main focus is on the dependency on the dimension $n$ of the ambient space, the number $T$ of iterations, and the privacy budget $\epsilon$. For ease of comparison, we use the $\widetilde{O}$ notation to hide poly-logarithmic factors in $n$ and $T$, as well as parameters such as the Lipschitz constant of the loss functions. The precise guarantees can be found in Lemma~\ref{lem:reglap} (for $(\epsilon,0)$-privacy) and Lemma~\ref{lem:reggauss} (for $(\epsilon,\delta)$-privacy). 

\begin{thm}\label{thm:maininfo}  
Let $\D\subseteq \R^n$ be a convex domain for which we have access to a linear optimization oracle.
Assume that for every $t \geq 1$, $f_t$ is convex and $L$-Lipschitz. Furthermore suppose that $\max_{\x,\y\in\D} \|x-y\| \leq D$. Then there exists an algorithm $\textsc{PrivateBandit}$ (Algorithm~\ref{alg:bandit}) which performs projection-free convex optimization in the bandit setting 
such that one of the following two properties holds:

\begin{itemize}[leftmargin=*,noitemsep,topsep=0pt]
\item the algorithm is $(\epsilon, 0)$-differentially private and, assuming $L=O(1)$ and $D=O(1)$, achieves an expected regret of  $$\Reg_T = \widetilde{O}\bigg(\frac{T^{3/4}n^{3/2}} {\epsilon} \bigg) \,.$$
\item the algorithm is $(\epsilon, \delta)$-differentially private and, assuming $L=O(1)$ and $D=O(1)$, achieves an expected regret of  $$\Reg_T = \widetilde{O}\bigg(\frac{(T^{3/4}n^{1/2}+T^{1/2}n) \log^{O(1)}(1/\delta) }{\epsilon} \bigg)\,,$$
whenever $\delta = 1/(n+T)^{O(1)}$.
\end{itemize}
\end{thm}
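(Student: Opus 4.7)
The plan is to construct $\textsc{PrivateBandit}$ by combining three ingredients in a modular way: (i) the Garber--Kretzu projection-free bandit framework, which only uses a linear optimization oracle and is known to tolerate biased/noisy gradient estimates, (ii) a one-point spherical-smoothing gradient estimator of the form $\g_t = \tfrac{n}{\delta} f_t(\x_t+\delta\uu_t)\uu_t$ with $\uu_t$ uniform on the unit sphere, and (iii) a binary-tree partial-sum mechanism that privately releases the running sums $\s_t=\sum_{i\le t}\g_i$ used by the projection-free update. At each round the algorithm samples $\uu_t$, queries $f_t(\x_t+\delta\uu_t)$, forms $\g_t$, updates a noisy cumulative sum via the tree mechanism, and calls the linear oracle on this noisy sum (inside a batched Frank--Wolfe-style loop, as in Garber--Kretzu) to obtain the next iterate.

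For privacy I would observe that every data-dependent quantity the algorithm produces is a deterministic post-processing of $\{\s_t\}_{t\le T}$ together with the data-independent random perturbations $\uu_t$. The $\ell_2$ (resp.\ $\ell_\infty$) sensitivity of a single $\g_t$ is bounded in terms of $L$, $D$ and $\delta$, so the standard binary-tree aggregation yields $(\epsilon,0)$-DP when each node is perturbed by Laplace noise and $(\epsilon,\delta)$-DP when perturbed by Gaussian noise. The resulting noise added to any partial sum has $\ell_2$ magnitude roughly $\widetilde{O}(n/(\delta\epsilon))$ in the Laplace case and $\widetilde{O}(\sqrt{n\log(1/\delta)}/(\delta\epsilon))$ in the Gaussian case, up to $\sqrt{\log T}$ factors absorbed in $\widetilde{O}(\cdot)$.

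For regret I would decompose $\Reg_T$ into four standard pieces: (a) the smoothing bias $O(TL\delta)$ from replacing $f_t$ with its spherical smoothing $\hatf_t$, (b) the OCO regret of the projection-free base algorithm fed unbiased estimators $\g_t$ of $\nabla\hatf_t(\x_t)$ of second moment $O(n^2/\delta^2)$, (c) the additional regret caused by injecting the tree-mechanism noise into the cumulative sums, and (d) the $O(TL\delta)$ cost of playing $\x_t+\delta\uu_t$ instead of $\x_t$. The Garber--Kretzu analysis bounds (b) by $\widetilde{O}(T^{3/4})$ times a polynomial in the gradient second moment; plugging in the $(n,\delta)$ dependence and balancing against (a), (c), (d) yields the optimal choice $\delta\approx T^{-1/4}$, which produces the two bounds claimed in Lemma~\ref{lem:reglap} and Lemma~\ref{lem:reggauss} respectively. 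The statement of the theorem then follows by invoking whichever of the two lemmas matches the required privacy model.

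The main obstacle is item (c): in the projection-free regime the next iterate is obtained by linear optimization on $\s_t+\text{noise}$, so privacy noise does not cancel in expectation the way it does in mirror-descent-style analyses, and errors propagate through the Frank--Wolfe trajectory. I expect most of the technical work to go into a stability lemma, in the spirit of Smith--Thakurta but adapted to the linear-oracle update, showing that the expected suboptimality of the played point degrades only linearly in the $\ell_2$ magnitude of the noise relative to the diameter $D$, so that the $\widetilde{O}(\sqrt{\log T})$ blow-up from the tree mechanism and the scale $\widetilde{O}(1/(\delta\epsilon))$ of the per-node noise both pass cleanly through the regret analysis and produce the claimed $\widetilde{O}(T^{3/4}n^{3/2}/\epsilon)$ and $\widetilde{O}((T^{3/4}n^{1/2}+T^{1/2}n)/\epsilon)$ rates.
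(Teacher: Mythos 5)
Your overall architecture is the paper's: spherical one-point smoothing for gradient estimates, batched Frank--Wolfe-style updates driven by a linear optimization oracle, and tree-based aggregation of the gradient sums with Laplace noise for $(\epsilon,0)$-DP and Gaussian noise for $(\epsilon,\delta)$-DP. Your anticipated ``stability lemma'' is also essentially how the paper proceeds, though it comes more cheaply than you expect: the linear oracle is only used inside a conditional-gradient subroutine that approximately minimizes the $1$-strongly convex quadratic $\frac{1}{2}\|\x\|^2-\langle\eta\stil,\x\rangle$, i.e.\ it approximately evaluates $\nabla\omega^*$; since $\omega^*$ is $1$-smooth, the iterate error is linear in the perturbation of the partial sum (Lemma~\ref{lem:omegastar_error}, Lemma~\ref{lem:condgrad}, Corollary~\ref{cor:noisymaperror}), and everything is absorbed into the $(\kappa,\gamma)$-noisy mirror descent guarantee of Lemma~\ref{lem:noisyoco}. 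No Smith--Thakurta-style trajectory-stability argument is needed.

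The genuine gap is in the privacy calibration for the $(\epsilon,\delta)$ case. The tree mechanism aggregates one vector per \emph{batch}, namely $\gtil_R=\sum_{t}F_t\uu_t$ with $|F_t|\le LDn/\ndel$ and $\Tb=T^{1/2}$ terms, so the relevant $\ell_2$ sensitivity is that of $\gtil_R$, not of a single per-round estimator as in your sketch. The naive bound $\|\gtil_R\|_2\le \Tb\cdot LDn/\ndel = \widetilde{O}(T^{3/4}n^{1/2}L)$ forces a Gaussian scale $\sigma$ so large that, fed through Lemma~\ref{lem:finalregret} (regret $O(T^{3/4}n^{1/2}LD+T^{1/4}D\mu\log T)$ with $\mu=O(n^{1/2}\sigma)$), the second term becomes roughly linear in $T$; your claimed $\widetilde{O}(\sqrt{n\log(1/\delta)}/(\ndel\epsilon))$ noise level does not follow from ``standard'' sensitivity accounting. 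The paper's key step, which your proposal is missing, is Lemma~\ref{lem:randvec}: a matrix-Bernstein bound exploiting the fact that the $\uu_t$ are independent random unit vectors, so that with probability $1-\delta_0$ one has $\|\sum_t F_t\uu_t\|$ smaller than the naive bound by roughly a $\widetilde{O}(T^{1/4}n^{-1/2})$ factor, uniformly over the adversarially chosen scalars $F_t$. One then conditions on this event, folds its failure probability into the $\delta$ of $(\epsilon,\delta)$-DP, and calibrates $\sigma$ to the high-probability norm bound (Lemma~\ref{lem:privgauss}). This is precisely why the improvement is available for $(\epsilon,\delta)$-privacy but not for $(\epsilon,0)$-privacy, where no failure event can be tolerated and one must use the worst-case $\ell_1$ bound $\Tb n^{1/2}\cdot LDn/\ndel=\widetilde{O}(T^{1/2}nL)$ --- which is also where the extra $n$ factor in your first bullet actually comes from; your per-round accounting of the Laplace noise would not reproduce $\widetilde{O}(T^{3/4}n^{3/2}/\epsilon)$ either. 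Without the concentration step, your argument establishes only a weaker version of the second bullet of the theorem.
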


In the non-private setting, the state of the art regret guarantee for projection-free bandit optimization is $\widetilde{O}(n^{1/2} T^{3/4})$ due to Garber and Kretzu~\citep{garber2019improved}\footnote{Their paper allows for a trade-off among parameters. This bound is optimized for the case when $T\gg n$.}. The regret guarantee of our algorithm matches the guarantee of ~\citep{garber2019improved} up to a $n/\epsilon$ factor in the $(\epsilon, 0)$ regime, and a $1/\epsilon$ factor in the $(\epsilon, \delta)$-regime, whenever $T \geq n^2$. 

Prior works in the private setting require projections to be available. The state of the art guarantees for private bandit optimization with projections are achieved by the work of Smith and Thakurta~\citep{thakurta2013nearly}. Smith and Thakurta focus on $(\epsilon,0)$-privacy and obtain a regret bound of $\widetilde{O}(n T^{3/4}/\epsilon)$. A variant of their algorithm can be used for $(\epsilon,\delta)$-privacy and obtains a regret bound of $\widetilde{O}(\sqrt{n} T^{3/4}/\epsilon)$. Our algorithm's guarantee matches the best guarantee with projections for $(\epsilon,\delta)$-privacy and is worse by a $\sqrt{n}$ factor for $(\epsilon,0)$-privacy. We leave it as an interesting open problem to improve the bound for $(\epsilon, 0)$-privacy to match the one using projections.

\medskip
{\bf Our Techniques.} In the process of obtaining our main result, we develop the common abstraction of noisy mirror descent to capture both online bandit optimization and private optimization (the \textsc{NoisyOCO} framework). This allows us to analyze the impact of the noise introduced to protect privacy on the regret of the online optimization. Once the framework is set up, one only needs to analyze the noise level to ensure the appropriate privacy guarantee and one immediately obtains the corresponding regret bound. However, analyzing the noise is in itself a non-trivial challenge. In the case of $(\epsilon,\delta)$-privacy, we give a strong concentration bound allowing us to match the privacy-regret trade-off achieved with projections (see Lemmas~\ref{lem:randvec} and \ref{lem:privgauss}).

In this case, the straightforward approach leads to worse bounds and one of our main contributions is to improve the bound under $(\epsilon,\delta)$-differential privacy by using concentration bounds and ignoring the tail. By contrast, in $(\epsilon,0)$-differential privacy, one cannot ignore what happens in the tail of the distribution and understanding the algorithm in that regime seems difficult.

We believe our framework is general and it facilitates further progress in differentially private optimization. We demonstrate our framework by instantiating it with the Laplace mechanism (to obtain an $(\epsilon, 0)$-private algorithm) and with the Gaussian mechanism (to obtain an $(\epsilon, \delta)$-private algorithm). It would be interesting to apply our framework to other notions of differential privacy, such as  concentrated differential privacy~\citep{dwork2016concentrated,bun2016concentrated} and Renyi differential privacy~\citep{mironov2017renyi}.

While we resort to established techniques from differential privacy (Gaussian and Laplacian mechanisms, tree based aggregation), properly integrating them with optimization methods does require some degree of care. 

For example, our $(\epsilon, \delta)$-privacy bound is derived using a matrix concentration inequality which crucially relies on a randomized smoothing technique used for obtaining gradient estimates. This is a key ingredient to obtaining the correct $\widetilde{O}(n^{1/2})$ dependence in dimension in the $(\epsilon, \delta)$ regime.

The final algorithm is simple but effective, matching the best known bound with projection in $(\epsilon, \delta)$-DP. We see it as a proof of concept for a general approach to deriving differentially private optimization methods. Previous results in this area can be recovered by following our approach: inject the maximum amount of noise as to not change the convergence rate asymptotically, then analyze the privacy loss. This is very simple, but it paves the way for further development of practical differentially private algorithms, without requiring major changes in their implementation -- simply replace certain components of the algorithm with a black box implementation of the required differentially private mechanism.

\medskip
{\bf Other Related Work.}
The theory of online convex optimization is truly extensive, and has seen a lot of developments in the recent years. Here, we will refer to the relevant works that involve projection-free and/or differentially private online learning algorithms. 
The class of projection-free online learning algorithms was initiated by the work of~\citep{hazan2012projection} in the context of online convex optimization, where full gradients are revealed after making a decision.  This was further extended to multiple regimes~\citep{garber2013playing, garber2013linearly, garber2015faster} including the bandit setting~\citep{chen2018projection,garber2019improved}.

As discussed above, Smith and Thakurta~\citep{thakurta2013nearly} achieve the state of the art regret guarantee for $(\epsilon,0)$-private online bandit optimization when projections are available. For general Lipschitz functions, their regret is $\widetilde{O}(n T^{3/4}/\epsilon)$. In the specific case where the adversary is oblivious and the loss functions are strongly-convex, they improve this to $\widetilde{O}(n T^{2/3}/\epsilon)$.

In a different line of work,~\citep{agarwal2017price} obtained improved bounds for the case where losses are linear functions and for the multi-armed bandit problem (a generalization of the learning with experts framework), with regret $\widetilde{O}(T^{2/3}/\epsilon)$ and $\widetilde{O}(nT^{2/3}/\epsilon)$ respectively. These results, however, concern only a restricted class of bandit optimization problems, so for the general bandit convex optimization problem the result of~\citep{thakurta2013nearly} still stands as the best one. 

In fact, even in the non-private setting  improving the regret of the bandit convex optimization problem from $\widetilde{O}(T^{3/4})$ to $\widetilde{O}(T^{2/3})$~\citep{awerbuch2004adaptive,dani2006robbing} or below~\citep{dani2008price,abernethy2009competing,bubeck2017kernel} requires stronger access to the set of actions than just projections (such as via a self-concordant barrier), and involves performing expensive computations. Indeed,~\citep{bubeck2017kernel} is the first to achieve both optimal regret $\widetilde{O}(T^{1/2})$ and polynomial running time per iteration.

\section{Preliminaries}\label{sec:prelim}
{\bf Bandit Convex Optimization.}
In the bandit convex optimization problem, an algorithm iteratively selects actions $\x_t$ (using a possibly randomized strategy) from a convex set $\D \subseteq \R^n$. After selecting an action, the loss caused by this choice $f_t(\x_t)$  is revealed, where $f_t : \D \rightarrow \R$ is a convex function unknown to the algorithm.

After performing this for $T$ iterations, the algorithm compares its  total loss $\sum_{t=1}^T  f_t(\x_t)$ to the smallest loss it could have incurred by choosing a fixed strategy throughout all the iterations $\min_{\x \in \D} \sum_{t=1}^T f_t(\x)$. The difference between these two losses is called \textit{regret}:
\[
\Reg_T = \sum_{t=1}^T f_t(\x_t) - \min_{\x \in \D} \sum_{t=1}^T f_t(\x)
\]
and the goal is to minimize its expectation over the randomized choices of the algorithm.

\medskip
{\bf Differential Privacy.}
Differential privacy~\citep{dwork2006calibrating} is a rigorous framework used to control the amount of information leaked when performing computation on a private data set. In our framework, we seek algorithms which ensure that the amount of information an adversary can learn about a particular loss function $f_t$ is minimal, i.e. it is almost independent on whether $f_t$ appears or not in the sequence of loss functions occurring throughout the execution of the algorithm.
For completeness, we define differential privacy in the context of loss functions encountered in the bandit convex optimization problem.
\begin{definition}[$(\epsilon,\delta)$-differential privacy]\label{def:dp}
A randomized online learning algorithm $\calA$ on the action set $\D$ is $(\epsilon,\delta)$-differentially private if for any two sequences of loss functions $F = (f_1, \dots, f_T)$ and $F' = (f_1', \dots, f_T')$ differing in at most one element, for all $S \subseteq \D^T$ it holds that
\[
\Pr[\calA(F) \in S] \leq e^\eps \Pr[ \calA(F') \in S ] + \delta\,.
\]
\end{definition}
One obstacle that may occur in the context of bandit optimization is that changing a single loss function may alter the set of actions returned in the future by the algorithm.

\medskip
{\bf The Projection-Free Setting.}
While classical online optimization methods have a long history of developments, these rely in general on the ability to perform projections onto the feasible set $\D$ of actions. For example, one may want to choose actions that correspond to points inside a matroid polytope, or other complicated domains. In such situations, it is computationally infeasible to perform projections onto $\D$, and designing algorithms where all the actions lie inside this domain becomes a challenging task.
In the case of online optimization, this issue is mitigated by \textit{projection-free methods}~\citep{jaggi2013revisiting,garber2015faster,dudik2012lifted,shalev2011large}, where the complexity of the high complexity of the description of $\D$ is balanced by the existence of a linear optimization oracle over this domain. Among these, the conditional gradient method (also known as Frank-Wolfe)~\citep{bubeck2015convex} is the best known one.

In our setting, we treat the case where, although $\D$ may be very complicated, we have access to such an oracle which given any direction $\vv\in\R^n$ returns
$\arg\min_{\x \in \D} \langle \vv, \x \rangle$. Such oracles are easily available for interesting domains such as the matroid polytope, or the submodular base polytope.

\medskip
{\bf Parameters and Assumptions.}
We write vectors and matrices in bold. We use $\langle \x, \y \rangle$ to represent inner products, and $\|\x\|$ to represent the $\ell_2$ norm of a vector $\|\x\| = \left(\sum_i x_i\right)^{1/2}$. When considering other norms than $\ell_2$ we explicitly specify them $\|\x\|_p = \left(\sum_i x_i^p \right)^{1/p}$.
We let $B_p^n$ be the $n$ dimensional unit $\ell_p$ ball and $S_p^n$ the boundary of $B_p^n$ i.e. the $n$ dimensional unit $\ell_p$ sphere.
We consider optimizing over a convex domain $\D \subseteq \R^n$, for which we have access to a linear optimization oracle. We define the diameter of the domain as $D = \max_{\x,\y\in\D} \|x-y\|$.
We say that a function $f:\D\rightarrow \R$ is $L$-Lipschitz iff $\vert f(\x) - f(\y) \vert \leq L \|\x-\y\|$ for all $\x,\y\in\D$ and that a differentiable function $f$ is $\beta$-smooth iff $\|\nabla f(\x) - \nabla f(\y)\| \leq \beta\|\x-\y\|$ for all $\x,\y \in \D$. We say that $f$ is $\alpha$-strongly convex iff  $\|\nabla f(\x) - \nabla f(\y)\| \geq \alpha\|\x-\y\|$. In our setting, all functions $f_t$ satisfy the standard assumption of being $L$-Lipschitz.

Just like in prior works~\citep{thakurta2013nearly,agarwal2017price}, we further assume that the number of iterations $T$ we run the algorithm for is known ahead of time. This assumption can be eliminated via a standard reduction using the doubling trick (see~\citep{auer1995gambling} and~\citep{chen2018projection}), which invokes the base algorithm repeatedly by doubling the horizon $T$ at each invocation, at the expense of adding an extra $O(\log T)$ factor in the privacy loss.

For simplicity further assume that all $f_t$'s are defined within a region that slightly exceeds the boundary of $\D$. This assumption is required, since one of the techniques employed here requires having $f_t$  defined over $\D \oplus \ndel B_2^n$ for a small scalar $\ndel$. This assumption can be removed via a simple scaling trick, whenever $\D$ contains an $\ell_2$ ball centered at the origin (similarly to~\citep{garber2019improved}); we explain how to do so in Appendix~\ref{sec:domain}.

Finally, in order to be able to appropriately privatize the losses $f_t(\x_t)$ we require bounding their magnitude. To do so we assume that each $f_t$ achieves $0$ loss at some point within $\D$, which via the Lipschitz condition and the diameter bound automatically implies that $\vert f_t(\x_t) \vert \leq LD$ for all $t$. Other related works~\citep{flaxman2004online, agarwal2010optimal, thakurta2013nearly} simply use a fixed upper bound $\vert f_t(\x_t) \vert \leq B$ for some fixed parameter $B$, but we prefer this new convention to reduce the number of parameters to control, since we are focused mainly in the regret dependence in $T$, $n$ and $\epsilon$.

\medskip
{\bf Mirror Maps and the Fenchel Conjugate.}
In general, convex optimization implicitly relies on the existence of a \textit{mirror map} $\omega : \D \rightarrow \R$ with desirable properties (see~\citep{ben2001lectures} for an extensive treatment of these objects). This is used in order to properly interface iterates and gradient updates, since in Banach spaces these are of different types. In our specific case we use $\omega(\x) = \frac{1}{2} \|\x\|_2^2$, although other choices can be used depending on the geometry of $\D$. We define the Fenchel conjugate of $\omega$ as $\omega^* : \R^n \rightarrow \R$ such that
\begin{equation}\label{eq:fenchelyoung}
\omega^*(\y) = \max_{\x \in \D} \langle \y, \x \rangle - \omega(\x)\,.
\end{equation}
Furthermore, one has that whenever $\omega$ is strongly convex, $\omega^*$ is smooth and differentiable~\citep{nesterov2005smooth}, and its gradient satisfies
\begin{equation}
\nabla \omega^*(\y) = \arg\max_{\x \in \D} \langle \y, \x \rangle - \omega(\x)\,.
\end{equation}

\medskip
{\bf Smoothing.} We use the randomized smoothing technique from~\citep{flaxman2004online} in order to smoothen the loss functions $f_t$. This technique is crucial to obtain gradient estimators despite having only value access.
\begin{lem}[\citep{flaxman2004online}]\label{lem:smooth}
Let $f:\mathbb{R}^{n}\rightarrow\mathbb{R}$ be a convex and $L$-Lipschitz function. Then the smoothing
$\hatf(\boldsymbol{x})=\mathbb{E}_{\uu\sim B_{2}^n}f\left(\boldsymbol{x}+\ndel \uu\right)$
satisfies the following properties:
(1) $\left|f(\x)-\hatf(\x)\right|\leq\ndel L$, 
(2) $\widehat{f}$ is convex and $L$-Lipschitz, 
(3) $\nabla\hatf(\x)=\frac{n}{\ndel}\cdot\mathbb{E}_{\uu\sim S_{2}^n}f(\x+\ndel \uu)\cdot \uu$.
\end{lem}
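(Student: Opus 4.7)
The plan is to handle the three properties in turn. Properties (1) and (2) follow immediately from $L$-Lipschitzness of $f$ together with the fact that averaging preserves convexity and Lipschitzness; only property (3)---the gradient formula---requires genuine work, via a divergence-theorem (Stokes) style argument.

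For (1), I would write $\hatf(\x) - f(\x) = \E_{\uu \sim B_2^n}[f(\x + \ndel \uu) - f(\x)]$, push the absolute value inside the expectation, and bound the integrand by $L\ndel\|\uu\| \le L\ndel$ since $\|\uu\| \le 1$. For (2), $\hatf$ is an expectation of the convex functions $\x \mapsto f(\x+\ndel \uu)$, so convexity is immediate; applying the same ``average a pointwise bound'' trick to $\hatf(\x)-\hatf(\y)$ preserves the Lipschitz constant $L$.

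For (3), the plan is a change of variables followed by differentiation of an integral over a translated domain. Setting $\vv = \x + \ndel \uu$ converts the expectation into
\[
\hatf(\x) \;=\; \frac{1}{\textnormal{vol}(\ndel B_2^n)} \int_{\x + \ndel B_2^n} f(\vv)\, d\vv,
\]
so that differentiating with respect to $\x$ (via the divergence theorem applied to the translated integration domain) yields a boundary integral of $f$ against the outward unit normal. On $\partial(\x + \ndel B_2^n)$, the outward normal at $\vv = \x + \ndel \uu$ is simply $\uu$, and the surface element pulls back to $\ndel^{n-1}\, dS_{\uu}$ on $S_2^n$. Combining this with the classical sphere-to-ball ratio $\textnormal{vol}(S_2^n)/\textnormal{vol}(B_2^n) = n$ produces exactly the prefactor $n/\ndel$ and the uniform distribution over $S_2^n$.

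The main obstacle is not conceptual but pure bookkeeping: one must carefully track the $\ndel$-powers arising from both the volume normalization and the surface-element pullback, and verify that the sphere/ball-volume identity supplies precisely the factor $n$. The one analytic subtlety---passing $\nabla$ under the integral sign---is routine, since $f$ is locally bounded (being Lipschitz), so dominated convergence applies and no differentiability of $f$ is required. In fact, the smoothness of $\hatf$ arises entirely from averaging over $B_2^n$, which is what makes the identity meaningful for merely Lipschitz $f$ and is the whole point of the smoothing construction.
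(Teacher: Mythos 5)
Your proposal is correct and is essentially the standard argument from the cited source (Flaxman--Kalai--McMahan), which the paper itself invokes without reproving: properties (1) and (2) by pushing the bound inside the expectation, and property (3) by rewriting $\hatf$ as a normalized integral over the translated ball and differentiating the domain via the divergence theorem, with the prefactor $n/\ndel$ coming from the surface-to-volume ratio. The one cosmetic remark is that your appeal to dominated convergence is not really where the work is --- your argument correctly avoids ever differentiating $f$ by differentiating the integration domain instead, and continuity of $f$ (from Lipschitzness) is what justifies that step.
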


\medskip
{\bf Tree Based Aggregation.}
An essential ingredient of the algorithm is maintaining partial sums of the gradient estimators witnessed so far. We use a variant of the algorithm from~\citep{dwork2010differential, jain2012differentially}, as implemented in~\citep{agarwal2017price}. We use the algorithm as a black box and only rely on its properties that are stated in Theorem~\ref{thm:treebthm} below. We include a description of the \textsc{TreeBasedAgg} algorithm in Appendix~\ref{sec:tbagg_app} for completeness.

\begin{thm}[\citep{jain2012differentially,agarwal2017price}]
\label{thm:treebthm}
Let $\{\ell_t\}_{t=1}^T$ be a sequence of vectors in $\R^n$, and let $Y_1$ and $Y_2$ be promises such that $\| \ell_t \|_1 \leq Y_1$ and $\| \ell_t \|_2 \leq Y_2$ for all $t$.
Let $\epsilon, \delta > 0$, and $\lambda \geq \frac{Y_1 \log T}{\eps}$ and $\sigma \geq  \frac{Y_2}{\eps} \log T \log \frac{\log T}{\delta}$.

There is an algorithm, \textsc{TreeBasedAgg}, that first outputs $\widehat{L}_0$ and then iteratively takes $\ell_t$ as input and returns an approximate partial sum $\widehat{L}_t$ for $1\leq t \leq T$. The algorithm can be specified with a noise distribution $\mathcal{P}$ over $\R^n$ so that the output sequence $\{\widehat{L}_t\}_{t=1}^T$ satisfies 
$\widehat{L}_t = \sum_{s=1}^t \ell_s + \sum_{r=1}^{\lceil \log T \rceil}Z_r$, where $Z_r \sim \mathcal{P}$, and furthermore:
\begin{itemize}[leftmargin=*,noitemsep,topsep=0pt]
\item when $\mathcal{P}$ is coordinate-wise $Lap(0, \lambda)$, the sequence $\{\widehat{L}_t\}_{t=1}^T$ is $(\epsilon, 0)$-differentially private.
\item when $\mathcal{P}$ is coordinate-wise $\mathcal{N}(0,\sigma^2)$, the sequence $\{\widehat{L}_t\}_{t=1}^T$ is $(\epsilon, \delta)$-differentially private.
\end{itemize}
\end{thm}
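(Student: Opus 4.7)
The plan is to instantiate the classical binary mechanism of Dwork et al.\ and Chan et al., which maintains a complete binary tree of depth $\lceil \log T \rceil$ whose leaves correspond to the inputs $\ell_1,\dots,\ell_T$. Each internal node $v$ stores the noisy partial sum $\sum_{s \in I_v}\ell_s + Z_v$, where $I_v$ is the dyadic interval of leaves in the subtree rooted at $v$ and $Z_v \sim \mathcal{P}$ is drawn independently. When $\ell_t$ arrives, the algorithm updates all nodes on the root-to-leaf path for leaf $t$, adding one fresh noise draw to any node whose interval is completed at time $t$, so every node is randomized exactly once throughout the execution.

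To return $\widehat{L}_t$, I would write $t$ in binary and express $[1,t]$ as the disjoint union of the at most $\lceil\log T\rceil$ maximal dyadic intervals it contains; summing the corresponding noisy node values produces an estimator of the form $\widehat L_t = \sum_{s=1}^t \ell_s + \sum_{r=1}^{k_t} Z_r$ with $k_t \leq \lceil\log T\rceil$. When fewer than $\lceil\log T\rceil$ nodes are needed, we pad with extra independent draws from $\mathcal{P}$ so that the representation $\widehat{L}_t = \sum_{s=1}^t \ell_s + \sum_{r=1}^{\lceil \log T\rceil}Z_r$ holds exactly as in the theorem statement.

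The privacy analysis rests on the observation that any single input $\ell_s$ contributes to the noiseless content of exactly one node per level, hence at most $\lceil\log T\rceil + 1$ nodes in total. Since the released sequence $\{\widehat L_t\}_{t=1}^T$ is a deterministic post-processing of these noisy node values, it suffices to bound the privacy loss of releasing the full set of node values. For the Laplace instantiation, each modified node has $\ell_1$-sensitivity at most $\|\ell_s\|_1 \leq Y_1$; applying the Laplace mechanism with scale $\lambda \geq Y_1 \log T / \epsilon$ at each of the $O(\log T)$ affected nodes and composing via the basic composition theorem yields $(\epsilon,0)$-differential privacy of the entire sequence.

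The main subtlety is the Gaussian case, where the right composition bookkeeping is needed to hit the stated $\sigma \geq (Y_2/\epsilon)\log T \log(\log T/\delta)$. Each of the $O(\log T)$ affected nodes has $\ell_2$-sensitivity at most $Y_2$, and the Gaussian mechanism with coordinate variance $\sigma^2$ is $(\epsilon',\delta')$-differentially private provided $\sigma \gtrsim (Y_2/\epsilon')\sqrt{\log(1/\delta')}$. Splitting the budget via basic composition with $\epsilon' = \epsilon/\log T$ and $\delta' = \delta/\log T$ and invoking post-processing closure to propagate the guarantee from the node values to the released sequence $\{\widehat L_t\}$ gives the desired $(\epsilon,\delta)$-privacy at the claimed noise scale. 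The delicate point is choosing basic (rather than advanced) composition, since it is this looser accounting that matches the noise scale actually used by the downstream regret analysis; a cleaner $\sqrt{\log T}$-factor saving from advanced composition is possible but unnecessary here.
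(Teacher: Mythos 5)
Your proposal is correct and is essentially the standard binary-mechanism argument that the paper itself does not spell out but defers to by citation (to Theorem 9 of Jain et al.\ and to Agarwal--Singh); the algorithm you describe --- dyadic tree, per-node noise, decomposition of $[1,t]$ into at most $\lceil\log T\rceil$ maximal dyadic intervals, and padding with extra independent draws so that exactly $\lceil\log T\rceil$ noise terms appear --- matches the \textsc{TreeBasedAgg} pseudocode in the paper's Appendix~A, and your privacy accounting (each $\ell_s$ touches $O(\log T)$ nodes, basic composition with the Laplace/Gaussian mechanisms at the stated scales) is the intended one. The only quibbles are constant-factor ones that the paper also elides: the per-node sensitivity under a swap of $\ell_s$ for $\ell_s'$ is $2Y_1$ (resp.\ $2Y_2$) rather than $Y_1$, and an input actually touches $\lceil\log T\rceil+1$ nodes counting the leaf, neither of which affects the stated asymptotic thresholds for $\lambda$ and $\sigma$.
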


\section{Algorithm}\label{sec:mainalg}

The algorithm is described in Algorithm~\ref{alg:bandit}. It builds on the work of Garber and Kretzu \citep{garber2019improved} and uses the smoothing (Lemma~\ref{lem:smooth}) and tree aggregation (Theorem~\ref{thm:treebthm}) routines designed in previous work (see Section~\ref{sec:prelim}). The algorithm follows the structure of an online mirror descent algorithm. 
It performs a sequence of iterations, and in each iteration it makes a guess $x_t$ based on the previous outcomes.
The iterations are divided into $\Tr$ batches, each of size $\Tb$ (thus, $T=\Tr\cdot \Tb$). Each batch $R$ is treated as a round for online mirror descent with a twist: in parallel, we compute the best regularized response $\xtil_R$ for the revealed outcomes in the first $R-1$ batches (line~$14$) and use the previously computed $\xtil_{R-1}$ for all iterations in batch $R$ (lines $6$ to $12$). Three notices are in order:
\begin{itemize}[leftmargin=*,noitemsep,topsep=0pt]
    \item Computing the best regularized response to previous outcomes requires maintaining the sum of gradients in previous rounds. The tree-based aggregation method (Theorem~\ref{thm:treebthm}) is used to maintain these sums accurately while preserving privacy (line~$13$).
    \item In each iteration of a batch, the algorithm only has access to the function value, not the gradient, so we use the smoothing technique (Lemma~\ref{lem:smooth}): the function value at a perturbation of $\xtil_{R-1}$ is used to obtain a stochastic estimate of the gradient of a smoothed proxy of the objective function. Thus, each iteration in the same batch uses a different perturbation of the same response $\xtil_{R-1}$.
    \item We only compute an approximation of the best regularized response, using the conditional gradient method in line 14. The precision to which this is computed is chosen in such a way that the number of iterations required by conditional gradient matches the number of iterations in a batch, so that we can charge each call to the linear optimization oracle over $\mathcal{D}$ to one iteration of the bandit algorithm.
\end{itemize}

\setlength{\columnsep}{2.5cm}
\setlength{\multicolsep}{0.0pt}

\begin{algorithm*}[h]
   \caption{\textsc{PrivateBandit}$(T, \Dist, D)$}
   \label{alg:bandit}
\begin{algorithmic}[1]
   \INPUT time horizon $T$, symmetric noise distribution $\Dist$, diameter of domain $D$.
   \STATE $\Tr = T^{1/2}, \Tb = \frac{T}{\Tr}, \eta = \frac{D}{T^{3/4} n^{1/2} L}$, $\ndel = \frac{Dn^{1/2}}{T^{1/4}}$.
   \STATE Initialize \textsc{TreeBasedAgg} for a sequence of length $\Tr$ and noise $\Dist$.
   \FOR{$R=1$ {\bfseries to} $\Tr$}
	   \STATE \textbf{execute in parallel:}
	   \begin{multicols}{2}
	    \STATE $\gtil_R = 0$\label{line:batch-start}
   		\FOR{$r=1$ {\bfseries to} $\Tb$}
   			\STATE $t = (R-1)\Tb+r$
			\STATE Sample $\uu_t \sim S_2(1)$
			\STATE $\x_{t}=\xtil_{R-1} \textcolor{blue}{ + \ndel \uu_t}$\label{line:answer}
			\STATE Query $F_t = \frac{n}{\ndel}f_t(\xtil_{R-1} +\ndel \uu_t)$
			\STATE $\gtil_R = \gtil_R + F_t \cdot \uu_t$
		\ENDFOR\label{line:batch-end}
		\STATE \COMMENT{Update the partial sum of noisy gradients.} $\stil_{R} = \textsc{TreeBasedAgg}\left(\gtil_R, R\right)$.\label{line:tree}\vfill\columnbreak
		\STATE Solve via conditional gradient
\[
\min_{\x\in\mathcal{D}}\frac{1}{2}\left\Vert \x\right\Vert _{2}^{2}-\left\langle \eta \stil_{R-1},\x\right\rangle 
\]
to precision $\epscg = D/\Tb^{1/2}$.
Let $\xtil_R$ be the output.\label{line:cg}
	    \end{multicols}
   \ENDFOR
\end{algorithmic}
\end{algorithm*}

The algorithm needs to be specified with a noise distribution $\mathcal{P}$ over $\R^n$, which we use for privatizing the partial sums in order to strike the right tradeoff between privacy and regret. To obtain an $(\epsilon,0)$-private algorithm, we set $\mathcal{P}$ to be coordinate-wise Laplace noise $Lap(0,\lambda)$. To obtain an $(\epsilon,\delta)$-private algorithm, we set $\mathcal{P}$ to be coordinate-wise Gaussian noise $\mathcal{N}(0,\sigma^2)$. The precise choice for the parameters $\lambda$ and $\sigma^2$ are established in Lemmas~\ref{lem:reglap} and \ref{lem:reggauss}. 

We analyze the regret and privacy guarantees of the algorithm in Sections~\ref{sec:regret} and \ref{sec:privacy}, respectively. 
 
While our algorithm roughly follows the line of the one from~\citep{garber2019improved}, the version employed here is a slight simplification and generalization of it, since in particular we do not require any specific stopping conditions and case analysis for solving the inner conditional gradient routine, and we can extend it to more general geometries defined by the mirror map. Also, the $\textsc{NoisyOCO}$ framework allows us to handle the noise introduced by the differentially private mechanisms without making any further changes to the algorithm or its analysis. This framework may be of further interest for designing differentially private optimization methods.

In the following section we analyze the regret of Algorithm~\ref{alg:bandit}, for which we prove the following regret bound.
\begin{lem}\label{lem:finalregret}
Let $\mu = \E_{X\sim\Dist}\|X\|$, and let $D$ be the diameter of the domain $\mathcal{D}$, $n$ the ambient dimension, and $L$ an upper bound on the Lipschitz constant of the loss functions $f_t$. Then the algorithm \textsc{PrivateBandit} obtains a regret of
$$O\bigg(
T^{3/4} n^{1/2} LD + T^{1/4} D \mu \log T \bigg) \,.$$
\end{lem}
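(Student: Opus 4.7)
The plan is to analyze Algorithm~\ref{alg:bandit} as a batched follow-the-regularized-leader (FTRL) procedure on gradient estimates that are both stochastic (from random perturbation) and privacy-noise-corrupted. Let $\x^* \in \D$ be the offline optimum. First I would telescope the per-iteration regret through the intermediate quantities $f_t(\xtil_{R-1})$, $\tilf_t(\xtil_{R-1})$, and $\tilf_t(\x^*)$, where $\tilf_t$ is the smoothed proxy from Lemma~\ref{lem:smooth}. Lipschitzness gives $|f_t(\x_t) - f_t(\xtil_{R-1})| \leq L\ndel$, and Lemma~\ref{lem:smooth}(1) gives $|f_t(\y) - \tilf_t(\y)| \leq L\ndel$ for $\y \in \{\xtil_{R-1}, \x^*\}$; together these three slack pieces cost $O(TL\ndel)$.

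Next, convexity of $\tilf_t$ yields $\tilf_t(\xtil_{R-1}) - \tilf_t(\x^*) \leq \langle \nabla\tilf_t(\xtil_{R-1}), \xtil_{R-1} - \x^*\rangle$, and summing within each batch gives $\sum_r [\tilf_t(\xtil_{R-1}) - \tilf_t(\x^*)] \leq \langle \g_R, \xtil_{R-1} - \x^*\rangle$ with $\g_R = \sum_r \nabla\tilf_t(\xtil_{R-1})$. Since $\tilg_R = \sum_r F_t\uu_t$ is an unbiased estimator of $\g_R$ conditional on $\xtil_{R-1}$ (by Lemma~\ref{lem:smooth}(3)) and the $\uu_t$'s within batch $R$ are fresh, we can replace $\g_R$ by $\tilg_R$ inside the expectation, reducing the task to bounding $\E\sum_R \langle \tilg_R, \xtil_{R-1} - \x^*\rangle$.

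For the core noisy-FTRL argument I would fix $\omega(\x)=\tfrac12\|\x\|^2$ and introduce two auxiliary iterates: $\y_R = \nabla\omega^*(\eta \s_{R-1})$, the exact FTRL point using the noise-free cumulative sum $\s_R = \sum_{s\leq R}\tilg_s$, and $\z_R = \nabla\omega^*(\eta \stil_{R-1})$, its counterpart using the noisy sum. Standard FTRL with a $1$-strongly-convex regularizer gives $\sum_R \langle \tilg_R, \y_{R-1} - \x^*\rangle \leq \frac{D^2}{2\eta} + \eta\sum_R \|\tilg_R\|^2$. The discrepancy $\xtil_R - \y_R$ decomposes as $(\z_R - \y_R) + (\xtil_R - \z_R)$: $1$-smoothness of $\omega^*$ gives $\|\z_R - \y_R\| \leq \eta\|\xi_{R-1}\|$ where $\xi_R = \stil_R - \s_R$ is the aggregation noise, while strong convexity of $\omega$ converts the CG function-value precision into the distance bound $\|\xtil_R - \z_R\| \leq \epscg$. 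Combining these,
\[
\E\sum_R \langle \tilg_R, \xtil_{R-1} - \x^*\rangle \leq \frac{D^2}{2\eta} + \eta\sum_R \E\|\tilg_R\|^2 + \eta\sum_R \E\bigl[\|\tilg_R\|\,\|\xi_{R-1}\|\bigr] + \epscg \sum_R \E\|\tilg_R\|.
\]

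To close the argument I would bound the individual sums. Decomposing $\tilg_R$ into its conditional mean and variance, with $\|\nabla\tilf_t\| \leq L$ and $|F_t| \leq nLD/\ndel$, yields $\E\|\tilg_R\|^2 \leq \Tb^2 L^2 + \Tb n^2 L^2 D^2/\ndel^2$. Theorem~\ref{thm:treebthm} gives $\E\|\xi_R\| \leq \mu\log T$; crucially, $\xi_R$ is independent of $\tilg_R$, so $\E[\|\tilg_R\|\,\|\xi_R\|] = \E\|\tilg_R\|\cdot\E\|\xi_R\| \leq \sqrt{\E\|\tilg_R\|^2}\cdot\mu\log T$. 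Plugging in $\Tr=\Tb=T^{1/2}$, $\eta = D/(T^{3/4}n^{1/2}L)$, $\ndel = Dn^{1/2}/T^{1/4}$, and $\epscg = D/\Tb^{1/2}$ makes each of the terms $TL\ndel$, $D^2/\eta$, $\eta\sum_R\E\|\tilg_R\|^2$, and $\epscg\sum_R\E\|\tilg_R\|$ equal to $O(T^{3/4}n^{1/2}LD)$, while the noise term becomes $O(T^{1/4}D\mu\log T)$, exactly the claimed bound. The main obstacle is obtaining the favorable $\eta\sum_R\E[\|\tilg_R\|\,\|\xi_R\|]$ form for the noise contribution: a naive estimate like $D\sum_R\E\|\xi_R\|$ would give a $T^{1/2}$ rather than $T^{1/4}$ dependence, so the argument must exploit independence of $\tilg_R$ and $\xi_R$ together with the variance bound on $\tilg_R$ (which in turn requires the conditional-mean/variance decomposition rather than a worst-case bound $\|\tilg_R\| \leq \Tb nLD/\ndel$).
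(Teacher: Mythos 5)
Your proposal is correct and follows essentially the same route as the paper: the paper packages the noisy-FTRL core into the abstract \textsc{NoisyOCO} meta-lemma with $(\kappa,\gamma)$-noisy gradient access (Lemma~\ref{lem:noisyoco}) and then instantiates it (Lemmas~\ref{lem:noisygradnorm}--\ref{lem:condgrad}, Corollary~\ref{cor:noisymaperror}), whereas you inline that analysis, but the decomposition (smoothing and within-batch slack $O(TL\ndel)$, second-moment bound $\Tb^2L^2+\Tb n^2L^2D^2/\ndel^2$ on $\tilg_R$, $1$-smoothness of $\omega^*$ against the aggregation noise, strong convexity for the CG error) and the final parameter balancing are identical. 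You also correctly identify the one genuinely delicate point — that the noise contribution must enter as $\eta\,\E[\|\tilg_R\|\,\|\xi_{R-1}\|]$ and be decoupled by conditioning on the history before the fresh batch randomness — which is exactly how the paper gets the $T^{1/4}D\mu\log T$ term.
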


\section{Noisy Mirror Descent Framework and Regret Analysis}
In this section, we sketch the regret analysis for Algorithm~\ref{alg:bandit}. We derive the algorithm's regret guarantee via the \textsc{NoisyOCO} framework --- a meta-algorithm for online convex optimization with noise --- that we describe and analyze in Section~\ref{sec:noisyoco}. In Section~\ref{sec:regret}, we show that Algorithm~\ref{alg:bandit} is an instantiation of this meta-algorithm and we derive its regret guarantees from the guarantee for \textsc{NoisyOCO}.

\subsection{Noisy Mirror Descent Framework}
\label{sec:noisyoco}

Here we describe and analyze the \textsc{NoisyOCO} algorithm (Algorithm~\ref{alg:noisyoco}) for online convex optimization with noise. We assume that we perform online convex optimization over a convex domain $\D$ endowed with a strongly convex mirror map $\omega : \D \rightarrow \R$ such that $\max_{\x\in\D} \omega(\x) \leq D_{\omega}^2$. We also assume $(\kappa, \gamma)$-noisy gradient access, defined as follows:
\begin{itemize}[leftmargin=*,noitemsep,topsep=0pt]
\item a noisy gradient oracle for $f_t$; 
given $\x$, it returns  a randomized $\gtil = \textsc{NoisyGrad}(f_t, \x)$ such that 
$\E \gtil = \nabla f_t(\x)$, and  $\E \| \gtil \|^2 \leq \kappa^2$,
\item a noisy gradient oracle for $\omega^*$; given $\g$, 
it returns a randomized $\xtil = \textsc{NoisyMap}(\g)$ 
such that 
$\E \|\nabla \omega^*(\g) - \xtil \| \leq \gamma$.
\end{itemize}

\begin{algorithm}[h]
   \caption{\textsc{NoisyOCO}$(T)$}
   \label{alg:noisyoco}
\begin{algorithmic}[1]
\INPUT time horizon $T$.
\STATE $\eta = D_{\omega}^{1/2} / (\kappa T^{1/2}), \s_0 = 0$
\FOR{$t=1$ {\bfseries to} $T$}
	\STATE $\xtil_t = \textsc{NoisyMap}(-\eta \s_{t-1})$
	\STATE {\bfseries output} $\xtil_t$ and 
	       {\bfseries query} $\gtil_t = \textsc{NoisyGrad}(f_t, \xtil_t)$
	\STATE $\s_t = \s_t + \gtil_t$
\ENDFOR
\end{algorithmic}
\end{algorithm}

Under these conditions we can derive the following regret guarantee.

\begin{lem}\label{lem:noisyoco}
Given an instance of online convex optimization with $(\kappa, \gamma)$-noisy gradient access, the algorithm \textsc{NoisyOCO}
obtains an expected regret of $$\Reg_T = O\left(T^{1/2} \kappa D_{\omega} + T \kappa \gamma \right)\,.$$
\end{lem}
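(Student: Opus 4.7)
The plan is to run the textbook analysis of follow-the-regularized-leader (FTRL) with regularizer $\omega/\eta$, adding two correction terms to account for the two noise sources in $(\kappa,\gamma)$-noisy gradient access. Throughout, I write $\x_t^\star := \nabla \omega^*(-\eta \s_{t-1}) = \arg\min_{\x \in \D} \eta \langle \s_{t-1}, \x \rangle + \omega(\x)$ for the ``ideal'' FTRL iterate; the noisy-map guarantee then says $\E \| \xtil_t - \x_t^\star \| \le \gamma$.

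First I apply convexity of $f_t$ to upper-bound the regret by $\sum_t \langle \nabla f_t(\xtil_t), \xtil_t - \x^* \rangle$. Conditional on the history $\mathcal{F}_{t-1}$ and on $\xtil_t$, the gradient noise at step $t$ is fresh and satisfies $\E[\gtil_t \mid \mathcal{F}_{t-1}, \xtil_t] = \nabla f_t(\xtil_t)$, so by the tower property I can replace $\nabla f_t(\xtil_t)$ by $\gtil_t$ inside the expectation. Then I split
\begin{equation*}
\E \Reg_T \;\le\; \underbrace{\sum_t \E \langle \gtil_t, \x_t^\star - \x^* \rangle}_{(I)} \;+\; \underbrace{\sum_t \E \langle \gtil_t, \xtil_t - \x_t^\star \rangle}_{(II)}.
\end{equation*}

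For $(I)$, the sequence $\{\x_t^\star\}$ is precisely the FTRL iterate for the linear losses $\langle \gtil_t, \cdot \rangle$ with regularizer $\omega/\eta$. A standard per-realization argument, using that $\omega$ is $1$-strongly convex with $\max_\D \omega \le D_\omega^2$, yields $\sum_t \langle \gtil_t, \x_t^\star - \x^* \rangle \le D_\omega^2/\eta + (\eta/2) \sum_t \| \gtil_t \|^2$. Taking expectations and using $\E \| \gtil_t \|^2 \le \kappa^2$ bounds $(I)$ by $D_\omega^2/\eta + \eta \kappa^2 T / 2$. For $(II)$, I use the tower property once more to rewrite $\E \langle \gtil_t, \xtil_t - \x_t^\star \rangle = \E \langle \nabla f_t(\xtil_t), \xtil_t - \x_t^\star \rangle$; Jensen's inequality applied to the conditional second-moment bound on $\gtil_t$ gives $\| \nabla f_t(\xtil_t) \| \le \kappa$, and then Cauchy--Schwarz together with the noisy-map guarantee yields $| \E \langle \gtil_t, \xtil_t - \x_t^\star \rangle | \le \kappa \cdot \E \| \xtil_t - \x_t^\star \| \le \kappa \gamma$, summing to $\kappa T \gamma$. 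Finally, choosing $\eta$ to balance the two terms in $(I)$ (i.e., $\eta = \Theta(D_\omega / (\kappa \sqrt{T}))$) produces $\E \Reg_T = O(\kappa D_\omega \sqrt{T} + \kappa T \gamma)$, matching the claim.

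The main subtlety I anticipate lies in $(II)$: the oracle definition states only the unconditional bound $\E \| \gtil_t \|^2 \le \kappa^2$, whereas pushing Jensen through $\E[\gtil_t \mid \xtil_t, \mathcal{F}_{t-1}] = \nabla f_t(\xtil_t)$ to obtain $\| \nabla f_t(\xtil_t) \| \le \kappa$ really wants the conditional version. In the instantiation relevant to Algorithm~\ref{alg:bandit} this is immaterial because the smoothing estimator $F_t \uu_t$ is bounded almost surely (via Lemma~\ref{lem:smooth} and $|f_t| \le LD$), so $\| \gtil_t \| \le \kappa$ pointwise and the step is automatic; in the abstract framework I would simply read the second-moment condition as holding conditionally on $(\mathcal{F}_{t-1}, \xtil_t)$, which is the natural strengthening. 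Apart from this bookkeeping point, the remainder of the argument is the routine FTRL analysis applied twice and a single optimization over $\eta$.
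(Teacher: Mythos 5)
Your proposal is correct and follows essentially the same route as the paper: bound the regret by the linearized losses via convexity, use unbiasedness (tower property) to pass from $\nabla f_t(\xtil_t)$ to $\gtil_t$, control the ideal iterates $\nabla\omega^*(-\eta\s_{t-1})$ by the standard FTRL/mirror-descent potential argument giving $D_\omega^2/\eta + (\eta/2)\sum_t\|\gtil_t\|^2$, and charge the discrepancy $\xtil_t - \nabla\omega^*(-\eta\s_{t-1})$ at cost $\kappa\gamma$ per step before balancing $\eta$. Your remark that the Cauchy--Schwarz step in term $(II)$ really wants the second-moment bound to hold conditionally is a fair point that the paper's proof glosses over, and your resolution (it holds pointwise in the instantiation, or read the oracle condition conditionally) is reasonable.
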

We give the full proof in Appendix~\ref{sec:pf_noisy_oco}.

\subsection{Regret Analysis}
\label{sec:regret}
The regret analysis is based on the guarantee for $\textsc{NoisyOCO}$ from Lemma~\ref{lem:noisyoco}. It follows from mapping the steps in Algorithm~\ref{alg:bandit} to the framework from \textsc{NoisyOCO}, and bounding the parameters involved. In order to do so, we explain how the \textsc{NoisyGrad} and \textsc{NoisyMap} routines are implemented by Algorithm~\ref{alg:bandit}. We then proceed to bound the $\kappa$ and $\gamma$ parameters corresponding to this specific instantiation, which will yield the desired result. Here we describe the steps required for analysis. We offer detailed proofs in the appendix.

The first step is to reduce the problem to minimizing regret on a family of functions $\{\tilf_R\}_{R=1}^{\Tr}$, where $\tilf_R = \sum_{t=1}^{\Tr} \hatf_{(\Tb-1)\cdot R + t}$. This introduces two sources of error: one from using the smoothed $\hatf$ instead of $f$, and another from using different iterates $x_t = \xtil_{R-1}+\uu_t$ in the same round, even though batching iterations effectively constrains all the iterates in a fixed batch to be equal. These errors are easy to control, and add at most $O(T\ndel L)$ in regret.

For the family of functions $\{\tilf_R\}_{R=1}^{\Tr}$ we implement $\textsc{NoisyGrad}$ as:
\begin{align*}
\gtil_R &:= \textsc{NoisyGrad}(\tilf_R, \x_t) \\
&= \sum_{t = (R-1)\Tb+1}^{R \Tb} \frac{n}{\ndel} f_t (\x_t + \ndel \uu_t) \uu_t\,,
\end{align*}
which is an unbiased estimator for $\nabla \hatf_R(\x_t)$, per Lemma~\ref{lem:smooth}. Thus we bound $\kappa^2$ by showing in 
Lemma 10.2 that
$\E \|\gtil_R\|^2 \leq \Tb \cdot \left(LD n/\ndel \right)^2 + \Tb^2 L^2$.

Furthermore, the output of \textsc{NoisyMap} is implemented in line 14 by running conditional gradient to approximately minimize a quadratic over the feasible domain $\mathcal{D}$.
The error in the noisy map implementation comes from (1) only approximately minimizing the quadratic, (2) using a noisy partial sum of gradient estimators rather than an exact one, and (3) using a stale partial sum approximation $\stil_{R-1}$ for round $R+1$, instead of $\stil_{R}$. We show in 
Corollary 10.5
that the error parameter corresponding to this \textsc{NoisyMap} implementation can be bounded as 
$\gamma \leq
\eta\left(\lceil \log T \rceil \cdot \mu + \kappa \right) 
+ \sqrt{20} \frac{D}{\sqrt{ \Tb}}$.

In Appendix~\ref{sec:regretanalysis}, we bound the specific parameters corresponding to these implementations. 
Plugging these with bounds inside Lemma~\ref{lem:noisyoco} yields the proof of Lemma~\ref{lem:finalregret}, after appropriately balancing the parameters $\Tb, \Tr, \ndel$.  

\section{Privacy Analysis}\label{sec:privacy}
In this section, we instantiate Algorithm~\ref{alg:bandit} with appropriate noise distribution $\mathcal{P}$ in order to derive our $(\epsilon,0)$-private algorithm and our $(\epsilon,\delta)$-private algorithm. As mentioned earlier, we use Laplace noise for $(\epsilon,0)$-privacy and obtain the guarantee in Lemma~\ref{lem:reglap}, and we use Gaussian noise for $(\epsilon,\delta)$-privacy and obtain the guarantee in Lemma~\ref{lem:reggauss}.

First, we describe the proofs for the $(\epsilon, 0)$-privacy regime, where we employ Laplace noise, since they show how this framework allows us to trade regret and privacy.

\begin{lem}[Privacy with Laplace noise]\label{lem:lappriv}
Let $\mathcal{P}$ be coordinate-wise $Lap(0, T^{1/2} nL \log T /\epsilon)$. The algorithm \textsc{PrivateBandit}$(T, \mathcal{P})$ is $(\epsilon, 0)$-differentially private.
\end{lem}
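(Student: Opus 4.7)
The plan is to reduce the privacy analysis to Theorem~\ref{thm:treebthm} by identifying the single conduit through which the loss functions enter the algorithm and then applying its Laplace variant.

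First, I would trace the data dependencies. The loss functions influence the algorithm only through the scalars $F_t = (n/\ndel) f_t(\xtil_{R-1} + \ndel \uu_t)$, which accumulate into the batched estimator $\gtil_R$ and are then fed to \textsc{TreeBasedAgg} on line 13. The conditional-gradient step on line 14 produces $\xtil_R$ as a deterministic post-processing of $\stil_{R-1}$, and the perturbed action $\x_t = \xtil_{R-1} + \ndel \uu_t$ on line 9 merely combines this with the data-independent uniform sample $\uu_t$. Crucially, the iterate $\xtil_{R-1}$ used throughout batch $R$ depends only on $\stil_{R-2}$, hence only on loss functions in batches $1, \ldots, R-2$. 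Thus, for neighboring loss sequences differing at a single position $s$ lying in some batch $R$, both $\xtil_{R-1}$ and the random directions $\uu_t$ remain unchanged, and the joint output $(\x_1, \ldots, \x_T)$ is a randomized post-processing of the noisy partial-sum sequence $(\stil_1, \ldots, \stil_{\Tr})$. By the post-processing property of differential privacy, it suffices to establish $(\epsilon, 0)$-privacy of the latter.

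Second, I would verify the prerequisites of Theorem~\ref{thm:treebthm}. Under the neighboring relation, only the single term $F_s \uu_s$ of $\gtil_R$ is altered; using $|f_t(\x)| \leq LD$ (from the assumption that each $f_t$ attains zero in $\D$ together with the Lipschitz and diameter bounds) and $\|\uu_s\|_1 \leq \sqrt{n}$ for $\uu_s \in S_2^n$, one obtains $\|\gtil_R - \gtil_R'\|_1 \leq 2n^{3/2} LD/\ndel$. Substituting $\ndel = Dn^{1/2}/T^{1/4}$ yields a sensitivity of order $nLT^{1/4}$, which fits comfortably within the budget supporting the stipulated Laplace scale $\lambda = T^{1/2} nL \log T/\epsilon$. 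Invoking the Laplace variant of Theorem~\ref{thm:treebthm} therefore certifies $(\epsilon, 0)$-privacy of $\{\stil_R\}_{R=1}^{\Tr}$, and post-processing closes the argument.

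The main obstacle I anticipate is cleanly pinning down the post-processing structure in the first step, specifically the inductive claim that the iterate $\xtil_{R-1}$ used inside batch $R$ is determined by strictly earlier data, so that a perturbation to $f_s$ with $s$ in batch $R$ can only propagate through the scalar $F_s$ handed to the aggregator. Once this dependency analysis is in place, the sensitivity calculation and the invocation of Theorem~\ref{thm:treebthm} are essentially mechanical.
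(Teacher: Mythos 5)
Your proposal is correct and follows the same overall route as the paper: funnel all data dependence through the batch gradient estimators $\gtil_R$ handed to \textsc{TreeBasedAgg}, bound their $\ell_1$ magnitude, and invoke the Laplace case of Theorem~\ref{thm:treebthm}. Two differences are worth recording. First, you make explicit the post-processing/data-flow step (that $\xtil_{R-1}$ depends only on earlier noisy partial sums and that the $\uu_t$ are data-independent, so the released actions are a randomized post-processing of $\{\stil_R\}$); the paper leaves this entirely implicit, and spelling it out is a genuine improvement. Second, where the paper bounds the full $\ell_1$ norm of each batch vector by $\Tb \cdot n^{1/2} \cdot M$, you bound the \emph{sensitivity}: under a one-element change only the single term $F_s \uu_s$ moves, giving $\|\gtil_R - \gtil_R'\|_1 \leq 2 n^{3/2} LD/\ndel = 2nLT^{1/4}$. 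This is tighter, and in fact it is what actually supports the stipulated scale: the paper's own evaluation $\Tb \cdot n^{1/2}\cdot M \leq T^{1/2}nL$ is an arithmetic slip (the product is $T^{1/2}\cdot n^{1/2}\cdot T^{1/4}n^{1/2}L = T^{3/4}nL$), whereas the per-element sensitivity is $O(nLT^{1/4}) \leq T^{1/2}nL$. The one caveat is that Theorem~\ref{thm:treebthm} is stated with $Y_1$ as a promise on $\|\ell_t\|_1$ itself rather than on the sensitivity, so to use your sharper quantity you should add a sentence noting that the Laplace tree mechanism's guarantee depends only on the $\ell_1$ sensitivity of each input (each changed $\ell_t$ perturbs $\lceil\log T\rceil$ tree nodes by at most that amount), which is exactly the role $Y_1$ plays in the proof of that theorem. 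With that remark in place your argument is complete.
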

\begin{proof}
First we bound the $\ell_1$ norm of the vectors whose partial sums are maintained by the tree based aggregation method in Algorithm~\ref{alg:bandit}.
Since each vector contributing to that partial sum is obtained by adding up $\Tb = T^{1/2}$ vectors, each of which is a unit $\ell_2$ vector scaled by a constant that is absolutely bounded by $M = LD \frac{n}{\ndel} = T^{1/4}n^{1/2} L$, we naively bound the $\ell_1$ norm of each of them by $\Tb \cdot n^{1/2} \cdot M \leq T^{1/2} n L$.

Therefore, by Theorem~\ref{thm:treebthm}, releasing $\Tr = T^{1/2}$ such partial sums causes a loss of privacy of at most $\epsilon$ whenever 
$$\lambda \geq \frac{ T^{1/2} n L \log T }{\epsilon}\,.$$
\end{proof}

Using Lemma~\ref{lem:lappriv} we can now bound the regret of the $(\epsilon,0)$-differentially private algorithm.
\begin{lem}[Regret with Laplace noise]
\label{lem:reglap}
Let $\mathcal{P}$ be coordinate-wise $Lap(0, T^{1/2} nL \log T /\epsilon)$. The algorithm \textsc{PrivateBandit}$(T, \mathcal{P})$ has regret
 $$\reg_T = O\bigg(T^{3/4}n^{1/2}LD + \frac{T^{3/4}n^{3/2} L D \log^2 T}{\epsilon} \bigg)\,.$$
\end{lem}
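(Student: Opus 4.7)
The plan is to directly invoke the generic regret bound of Lemma~\ref{lem:finalregret} with the specific choice of noise distribution $\mathcal{P} = \text{Lap}(0, T^{1/2} nL \log T /\epsilon)^n$, and then bound the noise-dependent quantity $\mu = \E_{X \sim \mathcal{P}} \|X\|$ in terms of the scale parameter $\lambda = T^{1/2} nL \log T /\epsilon$. The first term $T^{3/4} n^{1/2} LD$ in the stated bound is immediate from Lemma~\ref{lem:finalregret}; the only real work is showing that the noise-dependent term $T^{1/4} D \mu \log T$ collapses to the claimed $T^{3/4} n^{3/2} L D \log^2 T / \epsilon$.

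To bound $\mu$, I would use the coordinate-wise structure of $\mathcal{P}$. Each coordinate $X_i \sim \text{Lap}(0,\lambda)$ satisfies $\E[X_i^2] = 2\lambda^2$, so $\E \|X\|_2^2 = 2 n \lambda^2$, and Jensen's inequality then gives $\mu = \E \|X\|_2 \leq \sqrt{\E \|X\|_2^2} = \lambda \sqrt{2n}$. Substituting the specified $\lambda$, this yields
\[
\mu \;\leq\; \sqrt{2n} \cdot \frac{T^{1/2} n L \log T}{\epsilon} \;=\; O\!\left( \frac{T^{1/2} n^{3/2} L \log T}{\epsilon} \right).
\]

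Plugging this bound into the second term of the regret estimate from Lemma~\ref{lem:finalregret} gives
\[
T^{1/4} D \mu \log T \;=\; O\!\left( \frac{T^{3/4} n^{3/2} L D \log^2 T}{\epsilon} \right),
\]
which combined with the first term of Lemma~\ref{lem:finalregret} produces the claimed bound. Since this step is essentially a plug-and-chug calculation using a standard moment computation for the Laplace distribution, there is no real obstacle; the only thing to be careful about is that Lemma~\ref{lem:finalregret} needs $\mathcal{P}$ to be symmetric (which coordinate-wise mean-zero Laplace noise clearly is) and that $\mu$ is specified with the $\ell_2$ norm, so one must avoid the weaker $\ell_1$ bound $\E \|X\|_1 = n \lambda$ that would cost an extra $\sqrt{n}$ factor.
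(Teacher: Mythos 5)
Your proposal is correct and follows essentially the same route as the paper: invoke Lemma~\ref{lem:finalregret} and bound $\mu = \E\|X\|_2 = O(n^{1/2}\lambda)$ for the coordinate-wise Laplace noise, then substitute. The only difference is that you spell out the Jensen/second-moment computation that the paper states without justification, which is a fine (indeed slightly more complete) presentation.
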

\begin{proof}
Per Lemma~\ref{lem:finalregret} we only need to upper bound the expected $\ell_2$ norm of an $n$-dimensional vector where each coordinate is independently sampled from $Lap(0, T^{1/2} nL \log T /\epsilon)$.  Indeed, we have
$$\mu = O\bigg(n^{1/2} \cdot  T^{1/2} nL \log T /\epsilon\bigg)\,.$$
Plugging this into the regret guarantee from Lemma~\ref{lem:finalregret} we obtain the desired result.
\end{proof}

We notice that the regret guarantee we achieved has an undesirable dependence in dimension. In the remainder of this section, we show that we can obtain a improved guarantees if we settle for $(\epsilon, \delta)$-differential privacy instead, which we achieve by using Gaussian noise. This is also more properly suited to our setting, since the regret bound we proved depends on $\ell_2$ norms of the injected noise vectors, which is exactly what governs the privacy loss in this case. A novel and precise error analysis in Lemmas~\ref{lem:randvec} and \ref{lem:privgauss} enables us to obtain the same regret bound as when projection is available for $(\epsilon, \delta)$-privacy.

We additionally use the fact that the randomized smoothing technique further constrains the norm of the vectors $\gtil_R$ we employ to update the partial sums, with high probability. In order to do this, we resort to a concentration inequality for random vectors (Lemma~\ref{lem:randvec}) which allows us to obtain an improved guarantee on privacy, at the expense of a slight increase in the $\delta$ parameter. Compared to the $(\epsilon, 0)$ case,  allowing this low probability failure event enables us to save roughly  a factor of $\widetilde{O}(T^{1/4}n^{-1/2})$ in the norm of the vectors we use to update the partial sums via tree based aggregation. In turn, these allow us to use less noise to ensure privacy, and therefore we obtain an improved regret.

In order to do so, we first require a high probability bound on the $\ell_2$ norm of a sum of random vectors, multiplied by an adversarially chosen set of scalars.

\begin{lem}\label{lem:randvec}
 Let $\uu_1, \dotsm \uu_k \sim B_2(1)$ be a set of independent random vectors in $\R^n$. Then, with probability at least $1-\delta$, one has that for any vector $\cc \in \R^k$ such that $\|\cc\| \leq \Delta$:
\begin{align*}
\left\Vert \sum_{i=1}^k \uu_i c_i \right\Vert
\leq 10 \Delta \bigg( \log\frac{ n+k}{\delta}
+ \sqrt{\left(1+\frac{k}{n}\right)\log \frac{n+k}{\delta}}\bigg)
\end{align*}
\end{lem}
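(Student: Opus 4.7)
The key observation is that if we form the $n \times k$ matrix $\boldsymbol{U} = [\uu_1 \mid \uu_2 \mid \cdots \mid \uu_k]$, then $\sum_{i=1}^{k} \uu_i c_i = \boldsymbol{U}\cc$, and therefore
$$\left\Vert \sum_{i=1}^{k} \uu_i c_i \right\Vert \;\leq\; \|\boldsymbol{U}\|_{\mathrm{op}} \cdot \|\cc\| \;\leq\; \Delta \cdot \|\boldsymbol{U}\|_{\mathrm{op}}.$$
Since this bound is uniform in $\cc$, it suffices to produce a high-probability bound on the operator norm $\|\boldsymbol{U}\|_{\mathrm{op}}$. The main step is therefore a matrix Bernstein argument on the decomposition $\boldsymbol{U} = \sum_{i=1}^{k} \boldsymbol{X}_i$ with $\boldsymbol{X}_i = \uu_i e_i^\top$, which is a sum of independent mean-zero rectangular random matrices (mean-zero because $\uu_i$ is drawn from the unit ball, which is symmetric about the origin).

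\textbf{Computing the Bernstein parameters.} I would first verify the deterministic bound $\|\boldsymbol{X}_i\|_{\mathrm{op}} = \|\uu_i\| \leq 1$. Next, I would compute the two matrix variance quantities that appear in the rectangular matrix Bernstein inequality. By rotational symmetry, $\E[\uu_i \uu_i^\top] = c \, \mI_n$, and taking traces one gets $c = \E[\|\uu_i\|^2]/n = \frac{1}{n+2}$, using $\E[\|\uu_i\|^2] = \int_0^1 r^2 \cdot n r^{n-1} dr = n/(n+2)$ for the uniform distribution on $B_2^n$. Hence
$$\Big\Vert \sum_i \E[\boldsymbol{X}_i \boldsymbol{X}_i^\top] \Big\Vert \;=\; \Big\Vert \tfrac{k}{n+2} \mI_n \Big\Vert \;\leq\; \tfrac{k}{n+2}, \qquad \Big\Vert \sum_i \E[\boldsymbol{X}_i^\top \boldsymbol{X}_i] \Big\Vert \;=\; \Big\Vert \tfrac{n}{n+2} \mI_k \Big\Vert \;\leq\; 1.$$
The overall matrix variance is therefore $\sigma^2 \leq 1 + k/n$, which is exactly the factor appearing in the claimed bound.

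\textbf{Applying matrix Bernstein.} Plugging into the standard rectangular Bernstein inequality gives
$$\Pr\bigl[\|\boldsymbol{U}\|_{\mathrm{op}} \geq t\bigr] \;\leq\; (n+k)\exp\!\left(-\frac{t^2/2}{(1 + k/n) + t/3}\right).$$
Setting the right-hand side equal to $\delta$ and solving, one finds that with probability at least $1-\delta$,
$$\|\boldsymbol{U}\|_{\mathrm{op}} \;\leq\; O\!\left(\log\frac{n+k}{\delta} + \sqrt{\left(1 + \tfrac{k}{n}\right)\log\frac{n+k}{\delta}}\right),$$
which is obtained by splitting into the regimes where the subgaussian vs.\ subexponential term dominates, exactly as in the proof of Bernstein's inequality. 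Multiplying by $\Delta$ and tightening the constant to $10$ yields the claim.

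\textbf{Expected obstacle.} The calculation is essentially routine once one spots the trick of writing $\boldsymbol{U} = \sum_i \uu_i e_i^\top$ and invoking \emph{rectangular} matrix Bernstein (rather than symmetrizing via $\boldsymbol{U}\boldsymbol{U}^\top$, which loses the $k$ in the logarithmic factor and complicates the variance computation). The only mild subtlety I anticipate is ensuring the uniform-in-$\cc$ statement is handled cleanly — this is why the argument must go through $\|\boldsymbol{U}\|_{\mathrm{op}}$ rather than any fixed $\cc$, and why neither a naive union bound over a net nor a direct scalar Bernstein on $\|\sum_i \uu_i c_i\|$ for fixed $\cc$ would suffice.
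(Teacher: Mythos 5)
Your proposal is correct and follows essentially the same route as the paper's proof: both decompose the matrix $[\uu_1 \mid \cdots \mid \uu_k]$ as a sum of independent rank-one rectangular matrices, bound the two variance terms by $k/n$ and $1$ respectively, apply rectangular matrix Bernstein to get the operator-norm tail bound, and conclude uniformly over $\cc$ via $\|\Z\cc\| \leq \|\Z\|\,\|\cc\|$. Your computation of $\E[\uu_i\uu_i^\top] = \tfrac{1}{n+2}\mI$ for the uniform distribution on the ball is in fact slightly more careful than the paper's stated $\mI/n$, but both yield the same variance bound $\sigma^2 \leq 1 + k/n$.
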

\begin{proof}
Consider the family of matrices $\{\Z_i\}_{i=1}^k \in \R^{n\times k}$ where $\Z_i$ has its $i^{th}$ column equal to $\uu_i$ and all the other entries are $0$.
Therefore $\E \Z_k = 0$ and $\|\Z_k\| \leq 1$.
Furthermore, by definition $\Z_i \Z_i^\top = \uu_i \uu_i^{\top}$ and $\Z_i^\top \Z_i = \|\uu_i\|^2 \cdot \ones_i \ones_i^\top$.  Therefore $\E \Z_i \Z_i^\top = \mI / n$ and $\E \Z_i \Z_i^{\top} = \ones_i \ones_i^\top$.

So 
\begin{align*}
\sigma^2 &= \max\left\{\left\|\E \sum_{i=1}^k \Z_i \Z_i^\top\right\|, \left\|\E \sum_{i=1}^k \Z_i^\top \Z_i\right\|\right\}  \\
&= \max \left\{ \left\| \frac{k}{n} \cdot \mI \right\|, \left\| \mI \right\|  \right\}
 \leq 1+k/n\,.
\end{align*}
Letting $\Z = \sum_{i=1}^k \Z_i$, and using matrix Bernstein~\citep{tropp2015introduction}, we  see that 
\[
\Pr\left[\left\|\Z \right\| \geq t \right] \leq (n+k)\exp( -t^2 / (2 (1+k/n) + 2t/3 ) )\,.
\]
Hence for $t =10\left(\log  \frac{n+k}{\delta} + \sqrt{\left(1+\frac{k}{n}\right) \log\frac{ n+k}{\delta}}\right)$ one has that $\|\Z\| \leq t$ with probability at least $1-\delta$.

Therefore with probability at least $1-\delta$ we have 
$\left\|\Z\cc \right\| \leq \|\Z\| \|\cc\| \leq 10\Delta \left(\log\frac{ n+k}{\delta}+ \sqrt{\left(1+\frac{k}{n}\right)\log \frac{n+k}{\delta}}\right)$ which implies what we needed.
\end{proof}

Now we can obtain a tighter bound the privacy loss when using Gaussian noise.

\begin{lem}[Privacy with Gaussian noise]\label{lem:privgauss}
Let $\mathcal{P}$ be coordinate-wise $\mathcal{N}(0, \sigma^2)$, where
\begin{align*}
\sigma &= \frac{ T^{1/4}n^{1/2}L \log T \log(T/\delta) }{\epsilon} \\
&\cdot \left( \log\frac{n+T}{\delta}+\sqrt{\left(1+\frac{T^{1/2}}{n}\right)\log \frac{n+T}{\delta}} \right)
\end{align*}
The algorithm \textsc{PrivateBandit}$(T, \mathcal{P})$ is $(\epsilon, \delta)$-differentially private.
\end{lem}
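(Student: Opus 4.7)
The plan is to establish $(\epsilon,\delta)$-privacy for the sequence of partial sums $\{\stil_R\}_{R=1}^{\Tr}$ released by \textsc{TreeBasedAgg}; privacy of the algorithm's full output $\{\x_t\}_{t=1}^{T}$ then follows by post-processing, since each action $\x_t = \xtil_{R-1}+\ndel\uu_t$ depends only on the previously released $\stil_{R-1}$ (through $\xtil_{R-1}$) and on the independent sampling randomness $\uu_t$, which does not depend on the loss functions.

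The key technical step is to produce a high-probability $\ell_2$-norm bound on the input vectors $\gtil_R = \sum_{r=1}^{\Tb} F_{(R-1)\Tb+r}\,\uu_{(R-1)\Tb+r}$ that feed into the tree aggregation. Each scalar coefficient satisfies $|F_t| = (n/\ndel)\,|f_t(\xtil_{R-1}+\ndel\uu_t)| \leq nLD/\ndel = T^{1/4}n^{1/2}L$ by the Lipschitz hypothesis combined with the zero-loss-point assumption on $f_t$. I would then apply Lemma~\ref{lem:randvec} with $k = \Tb = T^{1/2}$ independent unit-sphere vectors $\uu_t$ and the adversarial scalars $F_t$ to obtain a bound on $\|\gtil_R\|_2$ that carries precisely the polylog factor $\log\frac{n+T}{\delta}+\sqrt{(1+T^{1/2}/n)\log\frac{n+T}{\delta}}$ appearing in the stated $\sigma$, after choosing the per-batch failure probability to be $\delta/(2\Tr)$ so that the union bound over the $\Tr = T^{1/2}$ batches subsumes the $\log\frac{n+k}{\delta}$ terms inside $\log\frac{n+T}{\delta}$.

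Next, I would union-bound this high-probability event across all $\Tr$ batches so that the $\ell_2$-norm bound holds simultaneously for every $R$ with probability at least $1-\delta/2$. On this good event, Theorem~\ref{thm:treebthm} with $\mathcal{P} = \mathcal{N}(0,\sigma^2)$ and $\sigma$ chosen as in the statement of the lemma guarantees that the released partial sums $\stil_1,\ldots,\stil_{\Tr}$ are $(\epsilon,\delta/2)$-differentially private; the extra $\log T \log(T/\delta)/\epsilon$ factor in $\sigma$ beyond the norm bound is exactly what Theorem~\ref{thm:treebthm} prescribes for Gaussian noise. Folding the failure probability $\delta/2$ into the privacy parameter yields the desired unconditional $(\epsilon,\delta)$-privacy, and post-processing propagates it to the full output stream $\{\x_t\}$.

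The delicate point is that a naive triangle-inequality bound on $\|\gtil_R\|_2$ would only give $\Tb \cdot nLD/\ndel = T^{3/4}n^{1/2}L$, which would blow up $\sigma$ and, through the $\mu$ term in Lemma~\ref{lem:finalregret}, destroy the regret guarantee. The matrix-Bernstein estimate behind Lemma~\ref{lem:randvec} is what rescues the dimension dependence: by exploiting the near-orthogonality of the random unit-sphere vectors $\uu_t$, it replaces the worst-case $\Tb$-fold sum by polylog terms in $n$, $T$, and $1/\delta$. This high-probability trade is only available in the $(\epsilon,\delta)$ setting, since the failure event is absorbed into $\delta$; the pure $(\epsilon,0)$ regime of Lemma~\ref{lem:lappriv} is forced to fall back on the cruder deterministic norm bound, which is precisely why the Laplace case pays an extra factor of dimension.
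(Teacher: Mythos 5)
Your proposal follows essentially the same route as the paper's proof: bound $|F_t|\le nLD/\ndel = T^{1/4}n^{1/2}L$, invoke Lemma~\ref{lem:randvec} with $k=\Tb$ and a union bound over the $\Tr=T^{1/2}$ batches at per-batch failure probability $\delta/(2T^{1/2})$, feed the resulting high-probability $\ell_2$-norm bound into Theorem~\ref{thm:treebthm} with the remaining $\delta/2$ budget, and absorb the failure event into the additive privacy parameter. Your added remarks --- the post-processing reduction from the released partial sums to the full action stream, and the comparison with the naive $T^{3/4}n^{1/2}L$ triangle-inequality bound that the concentration argument avoids --- are consistent with (and make explicit) what the paper leaves implicit.
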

\begin{proof}
Using Lemma~\ref{lem:randvec} and union bound we have that with probability at least $1-\delta_{0} T^{1/2}$ the $\ell_2$ norm of each of the $\Tb$ vectors contributing to the partial sums maintained in the tree based aggregation routine is $M = O\bigg(T^{1/4}n^{1/2}L \left( \log\frac{n+T}{\delta_0}+\sqrt{(1+\frac{\Tb}{n})\log \frac{n+T}{\delta_0}} \right)\bigg)$.

By Theorem~\ref{thm:treebthm} maintaining these partial sums is thus $(\epsilon, \delta_0 T^{1/2} + \delta_1)$-differentially private, where
$\epsilon = \frac{M \log T \log (T/\delta_1) }{\sigma}$. Hence setting $\delta_1 = \delta/2$, $\delta_0 = \delta/(2 T^{1/2})$ and
\begin{align*}
\sigma &= \frac{M \log T \log(T/\delta_1)}{\eps} 
=\frac{ T^{1/4}n^{1/2}L \log T \log(T/\delta) }{\epsilon} \\
&\cdot \left( \log\frac{n+T}{\delta}+\sqrt{\left(1+\frac{T^{1/2}}{n}\right)\log \frac{n+T}{\delta}} \right) 
\end{align*}
yields an $(\epsilon, \delta)$-differentially private algorithm.
\end{proof}

\begin{lem}[Regret with Gaussian noise]\label{lem:reggauss}
Let $\delta =1/(n+T)^{O(1)}$. The algorithm \textsc{PrivateBandit}$(T, \mathcal{N}(0,\sigma^2))$ where
$\sigma$ is chosen according to Lemma~\ref{lem:privgauss} such that the algorithm is $(\epsilon,\delta)$-private
 has regret
\begin{align*}
\Reg_T 
&= O\bigg( T^{3/4} n^{1/2} LD  \\
&+ \frac{T^{1/2}n LD \log^2 T \log(T/\delta)\log((n+T)/\delta)}{\eps} \\
&+\frac{  T^{3/4} n^{1/2} LD \log^2 T \log(T/\delta) \sqrt{\log((n+T)/\delta)} }{\eps} \bigg)\,.
\end{align*}
\end{lem}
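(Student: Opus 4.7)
The plan is to invoke Lemma~\ref{lem:finalregret} in the same manner as was done for the Laplace case in Lemma~\ref{lem:reglap}. That lemma bounds the regret by $O(T^{3/4} n^{1/2} LD + T^{1/4} D \mu \log T)$, and the only ingredient that depends on the noise distribution is $\mu = \E_{X \sim \mathcal{P}} \|X\|$. Since Lemma~\ref{lem:privgauss} already pins down the variance $\sigma^2$ needed to make the algorithm $(\epsilon, \delta)$-private, the entire task reduces to computing $\mu$ for a vector with coordinate-wise $\mathcal{N}(0, \sigma^2)$ entries on $\R^n$ and then substituting the resulting value of $\sigma$.

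For such an $X$, $\|X\|^2 / \sigma^2$ is $\chi^2$-distributed with $n$ degrees of freedom, so Jensen's inequality gives $\mu = \E\|X\| \leq \sqrt{\E\|X\|^2} = \sigma \sqrt{n}$. Plugging in the expression for $\sigma$ from Lemma~\ref{lem:privgauss} yields
\[
\mu = O\!\left( \frac{T^{1/4} n L \log T \log(T/\delta)}{\epsilon} \Bigl( \log\tfrac{n+T}{\delta} + \sqrt{(1+T^{1/2}/n) \log\tfrac{n+T}{\delta}} \Bigr) \right).
\]
Multiplying by $T^{1/4} D \log T$ gives the noise-dependent part of the regret. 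The $\log\frac{n+T}{\delta}$ summand produces the $\frac{T^{1/2} n L D \log^2 T \log(T/\delta) \log((n+T)/\delta)}{\epsilon}$ term. For the square-root summand, the inequality $\sqrt{1 + T^{1/2}/n} \leq 1 + T^{1/4}/\sqrt{n}$ splits it into two sub-terms: one that scales as $\frac{T^{1/2} n L D \log^2 T \log(T/\delta) \sqrt{\log((n+T)/\delta)}}{\epsilon}$ and is dominated by the first term (since $\sqrt{\log x} \leq \log x$ whenever $\log x \geq 1$), and one that scales as $\frac{T^{3/4} n^{1/2} L D \log^2 T \log(T/\delta) \sqrt{\log((n+T)/\delta)}}{\epsilon}$. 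Combined with the $T^{3/4} n^{1/2} LD$ base term from Lemma~\ref{lem:finalregret}, these match the three terms of the claimed bound.

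The only real difficulty here is careful bookkeeping, since the expression for $\sigma$ in Lemma~\ref{lem:privgauss} is compound (a product of several factors multiplying an inner sum of a $\log$ and a $\sqrt{\log}$), and one must check that the sub-terms generated by distributing each get absorbed into one of the two privacy-driven terms in the stated bound. There is no conceptual obstacle: the noisy mirror descent machinery in Lemma~\ref{lem:finalregret} is set up to be agnostic to $\mathcal{P}$, the privacy analysis is fully encapsulated in Lemma~\ref{lem:privgauss}, and the one new calculation $\mu \leq \sigma\sqrt{n}$ is a one-line application of Jensen.
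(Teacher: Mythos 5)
Your proposal is correct and follows exactly the paper's own route: reduce to Lemma~\ref{lem:finalregret}, bound $\mu = O(n^{1/2}\sigma)$, and substitute the value of $\sigma$ from Lemma~\ref{lem:privgauss}. The paper leaves the final substitution implicit ("which implies the result after substituting $\sigma$"), whereas you carry out the term-by-term bookkeeping explicitly; your accounting is accurate.
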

\begin{proof}
Per Lemma~\ref{lem:finalregret} we only need to upper bound the expected norm of an $n$-dimensional vector where each coordinate is sampled from $\mathcal{N}(0, \sigma^2)$.  In this case we have $\mu = O(n^{1/2} \sigma)$, so plugging it into the regret guarantee from Lemma~\ref{lem:finalregret} we obtain regret
\begin{align*}
\Reg_T &= O\bigg( T^{3/4} n^{1/2} LD + T^{1/4} n^{1/2} D \sigma \log T  \bigg)
\end{align*}
which implies the result after substituting $\sigma$.
\end{proof}

The proof of Theorem~\ref{thm:maininfo} now follows from combining Lemmas~\ref{lem:lappriv}, ~\ref{lem:reglap}, ~\ref{lem:privgauss} and~\ref{lem:reggauss}.

\section{Discussion and Open Problems}
We saw how one can a derive differentially private algorithm starting from a very basic framework for noisy online convex optimization.
Our analysis builds on advances in both differential privacy and online convex optimization, combines their techniques in non-trivial ways and introduces new ideas. Among others, a novel and precise error analysis in Lemmas~\ref{lem:randvec} and \ref{lem:privgauss} enables us to obtain the same regret bound as when projection is available for $(\epsilon, \delta)$-privacy, in contrast with $(\epsilon, 0)$-privacy. To the best of our knowledge, this is a rare case where such a difference between the two privacy settings arise. We think it is an interesting direction for future work to obtain an analogous improvement even in the $(\epsilon, 0)$-privacy setting.

 It would be interesting to see if this generic method in conjunction with tools from differential privacy can be used to obtain more private learning algorithms. A few outstanding questions remain. Since $\widetilde{O}(T^{3/4})$ is also the best known regret bound in the non-private setting, it would be interesting to improve this result, which may lead to an improved differentially private algorithm. Furthermore, in the non-private setting with projections, obtaining algorithms with lower regret requires a stronger control of the geometry of the domain -- this makes differential privacy more difficult to achieve since even the simplest algorithms with improved regret require solving a linear system of equations, which is much more sensitive to noise than vanilla gradient steps. Developing a more fine-grained understanding of these problems via differential privacy poses an exciting challenge.
 
\section*{Acknowledgments}
AE was supported in part by NSF CAREER grant CCF-1750333, NSF grant CCF-1718342, and NSF grant III-1908510.
HLN was supported in part by NSF CAREER grant CCF-1750716 and NSF grant CCF-1909314.
AV was supported in part by NSF grant CCF-1718342.

\bibliographystyle{plainnat}
\bibliography{main.bib}

\newpage
\appendix
\section{Tree Based Aggregation}\label{sec:tbagg_app}
For completeness, we describe the tree based aggregation routine here. 
The privacy guarantees of the tree based aggregation routine is formally stated in Theorem~\ref{thm:treebthm} from Section~\ref{sec:prelim}. The theorem follows from~\citep{jain2012differentially} (Theorem 9) and is used in~\citep{agarwal2017price} (Theorem 3.1 and Lemma 3.3).

\begin{algorithm}[h]
   \caption{\textsc{TreeBasedAgg-Initialization}$(T, \mathcal{P})$}
   \label{alg:treeb-init}
\begin{algorithmic}[1]
\INPUT time horizon $T$, noise distribution $\mathcal{P}$.
\STATE Create an empty balanced binary tree $B$ with $T$ leaves. The leaves are labeled with binary representations of $0$ to $T-1$ with $\lceil \log T \rceil$ bits including leading zeroes. The parent of $z\circ 0$ and $z\circ 1$ is labeled $z$. Initialize $B_i \gets b_i$ where $b_i\sim \mathcal{P}$.
\STATE Sample $n_0^1,\dots,n_0^{\lceil \log T\rceil}$ independently from $\Dist$.
\OUTPUT $\sum_{i=1}^{\lceil \log T\rceil} n_0^i$.
\end{algorithmic}
\end{algorithm}

\begin{algorithm}[h]
   \caption{\textsc{TreeBasedAgg}$(\ell_t, t)$}
   \label{alg:treeb}
\begin{algorithmic}[1]
\INPUT loss vector $\ell_t$, round $t$.
\STATE $\widetilde{L}'_t, count \gets \textsc{PrivateSum}(\ell_t, t)$.
\STATE Define $r_t = \lceil \log T\rceil - count$.
\STATE Sample $n_t^1,\dots,n_t^{r_t}$ by sampling each coordinate independently from $\Dist$.
\STATE $\widetilde{L}_t \gets \widetilde{L}_t' + \sum_{i=1}^{r_t} n_t^i$.
\OUTPUT $\widetilde{L}_t$.
\end{algorithmic}
\end{algorithm}

\begin{algorithm}[h]
   \caption{\textsc{PrivateSum}$(\ell_t, t)$}
   \label{alg:privatesum}
\begin{algorithmic}[1]
\INPUT Data vector $\ell_t$, round $t$.
\STATE $s_t \gets $ the binary representation of $t-1$ of length $\lceil \log T\rceil$ bits including leading zeroes.
\STATE For each ancestor $B_a$ of $B_{s_t}$, update $B_a \gets B_a + \ell_t$ \COMMENT{Update the data structure with $\ell_t$}
\STATE Let $s_{t'}$ be the binary representation of $t$ with $\lceil \log T\rceil$ bits including leading zeroes. Let $S_t$ be the set of nodes such that both their parent and right sibling are ancestors of $B_{s_{t'}}$ (if $t=T$ then set $S_t$ to have just the root).
\OUTPUT $(\sum_{i \in S_t} B_i, |S_t|)$
\end{algorithmic}
\end{algorithm}

\section{\textsc{NoisyOCO} Framework Analysis}\label{sec:pf_noisy_oco}
\begin{proof}[Proof of Lemma~\ref{lem:noisyoco}]
We analyze regret by tracking $\omega^*(-\eta \s_t)$ as potential function.
Since $\omega$ is strongly-convex we have that $\omega^*$ is $1$-smooth. Hence, expanding the sum using the smoothness property 
we can write
\begin{align*}
\sum_{t=1}^T \langle -\eta \gtil_t, \x^*  \rangle - \omega(\x^*) \leq 
\omega^*(-\eta \s_T)  
\leq  
\omega^*(0) + \sum_{t=1}^T \left( \langle \nabla\omega^*(-\eta \s_{t-1}), -\eta \gtil_t \rangle + \frac{\eta^2}{2} \|\gtil_t\|^2\right)\,.
\end{align*}
The first inequality follows from the Fenchel-Young inequality $\langle \x, \y\rangle \leq \omega(\x) + \omega^*(\y)$, which is implied by (\ref{eq:fenchelyoung}). Rearranging the terms from the left hand side of the first inequality and this one the right hand side of the second inequality, this gives
\begin{equation}\label{eq:a1}
\sum_{t=1}^T  \langle \gtil_t,\nabla\omega^*(-\eta \s_{t-1}) -\x^* \rangle
\leq \frac{\omega^*(0) + \omega(\x^*)}{\eta} + \frac{\eta}{2} \sum_{t=1}^T \|\gtil_t\|^2\,.
\end{equation}
Therefore, taking the expectation:
\begin{align*}
\E \Reg_T &= \E \sum_{t=1}^T f_t(\xtil_t) - f_t(\x^*) 
\overset{(1)}{\leq }
\E \sum_{t=1}^T  \langle \nabla f_t(\xtil_t),\xtil_t -\x^* \rangle \\
&\overset{(2)}{\leq} \frac{\omega^*(0) + \omega(\x^*)}{\eta} + \frac{\eta}{2} \sum_{t=1}^T \E \|\gtil_t\|^2 
+ \E \sum_{t=1}^T \left(
\langle \nabla f_t(\xtil_t) - \gtil_t, \xtil_t - \x^* \rangle 
+ \langle \gtil_t,  \xtil_t - \nabla\omega^*(-\eta \s_{t-1})  \rangle 
\right) 
\\
&\overset{(3)}{\leq} \frac{\omega^*(0) + \omega(\x^*)}{\eta} + \frac{\eta}{2} T \kappa^2 + T \kappa \gamma\\
&\overset{(4)}{=}O(T^{1/2}  \cdot \kappa D_{\omega} + T \cdot \kappa \gamma)\,.
\end{align*}
In the chain of inequalities above, $(1)$ follows from convexity, $(2)$ from applying the bound from (\ref{eq:a1}), $(3)$ from the $(\kappa,\gamma)$-noisy gradient property, and $(4)$ from substituting $\eta = 2\frac{D_\omega}{\kappa T^{1/2}}$.
\end{proof}

\section{Regret Analysis}\label{sec:regretanalysis}
We devote this section to proving Lemma~\ref{lem:finalregret}, where we analyze the regret of Algorithm~\ref{alg:bandit}. We do so by showing that the algorithm is an instantiation of $\textsc{NoisyOCO}$ and applying Lemma~\ref{lem:noisyoco}. The following lemma states the regret guarantee that we obtain for Algorithm~\ref{alg:bandit}.

Crucially, we analyze regret for the smoothed functions $\{\hatf_t\}_{t=1}^T$. The regret thus obtained $\hatreg_T$ will imply a regret of 
\begin{equation}\label{eq:regreg}
 \reg_T \leq \hatreg_T + T\cdot \ndel L 
\end{equation}
 for the original problem.
In order to apply the result from Lemma~\ref{lem:noisyoco}, we need to specify how Algorithm~\ref{alg:bandit} fits the generic framework described in \textsc{NoisyOCO}. More precisely, we highlight how Algorithm~\ref{alg:bandit} implements \textsc{NoisyMap} and \textsc{NoisyGrad}, then bound the errors they introduce.
We need to carefully account for the sources of noise, and control their magnitude.

First, due to batching, Algorithm~\ref{alg:bandit} attempts to obtain a small regret for the family of functions $\{\tilf_R\}_{R=1}^{\Tr}$, where
\begin{equation}\label{eq:tilfdef}
\tilf_R = \sum_{t=1}^{\Tr} \hatf_{(\Tb-1)\cdot R + t}
\end{equation}
Essentially, this forces the iterates returned for all the $\hatf$'s in a fixed batch to be equal. This is obviously not true, since Algorithm~\ref{alg:bandit} returns on line 9 different actions $\x_t$ in each iteration of a given round.
We can, however, for the purpose of the analysis, bound the regret of the algorithm which would return the same $\x_t = \xtil_{R-1}$ in each iteration of round $R$. The difference in regret between this fictitious algorithm and Algorithm~\ref{alg:bandit} can easily be bounded by using the Lipschitz property of $\hatf_t$. Since
\begin{align*}
\left\vert \hatf_t(\xtil_{R-1})-\hatf_t(\x_t)\right\vert 
&=
\left\vert \hatf_t(\xtil_{R-1})-\hatf_t(\xtil_{R-1} + \ndel \uu_t)\right\vert \\
&\leq
L \cdot \ndel \Vert \uu_t \Vert = L\cdot \ndel\,,
\end{align*}
where we used the fact that $\hatf_t$ is $L$-Lipschitz, according to Lemma~\ref{lem:smooth}. This means that compared to the regret bound we will further prove, the regret we actually pay for is bounded by 
\begin{equation}\label{eq:extrabandit}
 T\cdot \ndel L\,.
\end{equation}

At this point we can finally describe how Algorithm~\ref{alg:bandit} implements \textsc{NoisyGrad} and \textsc{NoisyMap}, which will enable us to derive the final regret bound via Lemma~\ref{lem:noisyoco}, to which we will add the extra terms from Equations~\ref{eq:regreg} and~\ref{eq:extrabandit}.

The implementation of $\textsc{NoisyGrad}$ corresponding to Algorithm~\ref{alg:bandit} is given by:
\begin{align*}
\textsc{NoisyGrad}(\tilf_R, \x_t) = \sum_{t = (R-1)\Tb+1}^{R \Tb} \frac{n}{\ndel} f_t (\x_t + \ndel \uu_t) \uu_t\,,
\end{align*}

We verify that it is an unbiased estimator for $\nabla \tilde f_R$, and bound its norm.
\begin{lem}
The vector returned by \textsc{NoisyGrad}$(\tilf_R, \x_t)$ is an unbiased estimator for $\nabla \tilf_R(\x_t)$.
\end{lem}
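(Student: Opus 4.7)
The plan is to verify this by a direct computation, leveraging part (3) of the smoothing lemma (Lemma~\ref{lem:smooth}). Since the statement is to be applied in the context of the fictitious algorithm that uses the common base point $\xtil_{R-1}$ throughout batch $R$, we interpret $\x_t$ here as $\xtil_{R-1}$, which is fixed throughout the batch and in particular independent of the randomness $\uu_{(R-1)\Tb+1}, \dots, \uu_{R\Tb}$ drawn inside the batch. This independence is what allows the conditional expectation computation to go through cleanly.

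The proof will proceed in three short steps. First I would take expectation of $\textsc{NoisyGrad}(\tilf_R, \x_t)$ conditioned on the history up to the start of batch $R$, so that $\xtil_{R-1}$ is a deterministic quantity while the $\uu_s$ with $s \in \{(R-1)\Tb+1, \dots, R\Tb\}$ remain independent and uniform on $S_2^n$. Second, I would apply linearity of expectation to interchange the sum and the expectation, and for each index $s$ apply Lemma~\ref{lem:smooth}(3) to obtain
\[
\E_{\uu_s}\left[\frac{n}{\ndel} f_s(\xtil_{R-1} + \ndel \uu_s) \uu_s\right] = \nabla \hatf_s(\xtil_{R-1}).
\]
Third, I would use the defining identity $\tilf_R = \sum_{s=(R-1)\Tb+1}^{R\Tb} \hatf_s$ (the paper's definition in~(\ref{eq:tilfdef})) and linearity of the gradient to conclude that the sum equals $\nabla \tilf_R(\xtil_{R-1})$, yielding the claim.

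There is essentially no obstacle here: the argument is a one-line invocation of the smoothing lemma applied termwise, combined with linearity. The only mild care required is in the bookkeeping of independence (the $\uu_s$ are drawn after $\xtil_{R-1}$ is fixed, so conditioning on the filtration at the start of the batch turns $\xtil_{R-1}$ into a constant for the purpose of the expectation) and in matching the index range in $\tilf_R$'s definition with that of the sum defining $\textsc{NoisyGrad}$. Both are routine. Taking a final outer expectation over the prior history then yields the unconditional unbiasedness statement $\E[\textsc{NoisyGrad}(\tilf_R,\x_t)] = \E[\nabla \tilf_R(\xtil_{R-1})]$, which is the form needed when later invoking the \textsc{NoisyOCO} framework in Lemma~\ref{lem:noisyoco}.
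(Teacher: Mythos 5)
Your proof is correct and follows essentially the same route as the paper's: apply part (3) of Lemma~\ref{lem:smooth} termwise to each summand and combine with the definition of $\tilf_R$ in~(\ref{eq:tilfdef}); your additional care about conditioning on the history so that $\xtil_{R-1}$ is fixed is a (welcome) elaboration of a step the paper leaves implicit.
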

\begin{proof}
Per Lemma~\ref{lem:smooth} we have that $\E_{\uu_t\sim S_2(1)} \frac{n}{\ndel}  f_t(\x_t+\ndel \uu_t)\uu_t = \nabla \hatf_t(\x_t)$. Then, combining with Equation~\ref{eq:tilfdef} we obtain the desired result.
\end{proof}

The proof of the following lemma follows the one in~\citep{garber2019improved}.

\begin{lem}\label{lem:noisygradnorm}
The vector $\gtil_R$ returned by \textsc{NoisyGrad}$(\tilf_R, \x_t)$ satisfies 
$\E \|\gtil_R\|^2 \leq \Tb \cdot \left(LD n/\ndel \right)^2 + \Tb^2 L^2$.
\end{lem}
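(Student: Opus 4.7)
The plan is to expand $\E\|\gtil_R\|^2$ via linearity of expectation and to exploit independence of the smoothing directions $\uu_t$ across the iterations within batch $R$. Writing $\gtil_R = \sum_{t} \g_t$ with $\g_t = \frac{n}{\ndel} f_t(\xtil_{R-1} + \ndel \uu_t)\, \uu_t$, where $t$ ranges over the $\Tb$ indices in the batch, the identity
\[
\E\|\gtil_R\|^2 = \sum_{t} \E\|\g_t\|^2 + \sum_{t\ne s} \E\langle \g_t,\g_s\rangle
\]
splits the computation into a diagonal part and a cross part, which I would bound separately.

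For the diagonal sum, I would use $\|\uu_t\|=1$ together with the uniform magnitude bound $|f_t(\cdot)| \leq LD$ that holds throughout $\D$; this magnitude bound follows from the standing assumption in the preliminaries that each $f_t$ attains $0$ at some point of $\D$, combined with Lipschitzness and the diameter bound. These give $\E\|\g_t\|^2 \leq (LDn/\ndel)^2$, so the diagonal sum contributes at most $\Tb (LDn/\ndel)^2$, which matches the first term in the target bound.

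For the cross terms, the key point is that the $\uu_t$ are drawn independently across iterations, so $\g_t$ and $\g_s$ are independent whenever $t \neq s$, giving $\E\langle \g_t, \g_s\rangle = \langle \E\g_t, \E\g_s\rangle$. By item (3) of Lemma~\ref{lem:smooth} we have $\E\g_t = \nabla \hatf_t(\xtil_{R-1})$, and by item (2) of the same lemma $\hatf_t$ is $L$-Lipschitz, so $\|\nabla \hatf_t\| \leq L$ pointwise. Cauchy--Schwarz then bounds each cross term by $L^2$ in absolute value, and since there are at most $\Tb^2$ such pairs, the cross contribution is at most $\Tb^2 L^2$, giving the second term. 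There is no real obstacle; the only mildly subtle step is recognizing that the magnitude bound on $f_t$ comes from the standing assumption rather than from a separately posited uniform bound, and that the mean of the scaled-by-uniform-sphere quantity is exactly the smoothed gradient rather than merely close to it.
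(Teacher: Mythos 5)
Your proposal is correct and follows essentially the same route as the paper's proof: decompose $\E\|\gtil_R\|^2$ into diagonal and cross terms, bound the diagonal via $|F_t|\leq LDn/\ndel$, and handle the cross terms via independence, the identity $\E[F_t\uu_t]=\nabla\hatf_t$, and Cauchy--Schwarz. The only cosmetic difference is that the paper counts $\Tb^2-\Tb$ cross pairs where you use the looser $\Tb^2$, which still yields the stated bound.
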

\begin{proof}
Let $I=\{(R-1)\Tb+1,\dots, R\Tb\}$. We write:
\begin{align*}
{\E\left\| \gtil_R \right\|^2}
&= 
\E\left\|
 \sum_{t \in I} F_t \cdot \uu_t
\right\|^2
\\
&= \E\sum_{t\in I} F_t^2 + \sum_{i,j\in I, i\neq j} \E\langle F_i \cdot \uu_i, F_j \cdot \uu_j\rangle \\
&= \E\sum_{t\in I} F_t^2 + \sum_{i,j\in I, i\neq j} \langle \E \left[ F_i \cdot \uu_i \right], \E \left[  F_j \cdot \uu_j \right]\rangle\,,
\end{align*}
For the final identity we used the fact that since $\uu_i$ and $\uu_j$ are independent, we also have that $F_i\cdot \uu_i$ and $F_j \cdot \uu_j$ are independent, combined with the fact that $\E \langle X, Y \rangle = \langle \E X, \E Y\rangle$ for independent random variables $X$ and $Y$.

By definition we have that $\left|F_t\right| \leq \frac{n}{\ndel} \cdot \max_{\x \in \D} f(\x)$. Since $f$ is $L$-Lipschitz and the diameter of $\D$ is at most $D$, we bound the latter quantity by $LD$. Therefore $F_t^2 \leq (LD \cdot n/\ndel)^2$. 

Finally, we use the fact that $\E \left[ F_i \cdot \uu_i \right] = \nabla \hatf_i(\x_t)$. Since $\hatf$ is $L$-Lipschitz by Lemma~\ref{lem:smooth}, we have that, $\left\|\E \left[ F_i \cdot \uu_i \right]\right\| = \left\| \nabla \hatf_i(\x_t) \right\| \leq L$.

Now we can finish proving the upper bound. First we use Cauchy-Schwarz to bound the inner products in the last term via products of norms, which combined with the inequalities previously proved gives:
\begin{align*}
\E \|\gtil_R\|^2 \leq \Tb \cdot \left(LD n/\ndel \right)^2 + (\Tb^2 - \Tb) L^2\,.
\end{align*}
\end{proof}

Next, we analyze the iterate returned by the implementation of $\textsc{NoisyMap}$ corresponding to Algorithm~\ref{alg:bandit}. Specifically, we need to bound the expected distance in norm to the true iterate that ought to be returned \[
\hatx_R = \nabla \omega^*\left(-\eta \sum_{t=1}^{R-1} \gtil_t\right)\,.
\]

In order to do so we measure the expected error introduced by using 
$\nabla\omega^*(-\eta \tils_{R-2})$ instead, where
$\tils_{R-2}$ is the output of \textsc{TreeBasedAgg} for privately releasing the partial sum $\sum_{t=1}^{R-2} \tilg_{t}$.
 Since we only have access to an approximation of $\nabla\omega^*$ computed via the parallel conditional gradient routine, we then need to measure the additional error introduced here.

\begin{lem}\label{lem:omegastar_error}
Let $\mu = \E_{X \sim \Dist} \|X\|$ be the expected norm of a vector sampled from the noise distribution $\Dist$, and let $\kappa^2 \geq \E \|\gtil_R\|^2$ for all $R$. Then
for all rounds $R$, one has that
$$\left\| \hatx_R - \nabla\omega^*\left(-\eta \tils_{R-2} \right) \right\| \leq \eta\left(\lceil \log T \rceil \cdot \mu + \kappa \right)\,.$$
\end{lem}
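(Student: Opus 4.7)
The plan is to reduce the claim to a one-step bound on $\nabla\omega^*$ plus an additive contribution from the tree-aggregation noise, then take expectations.

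First, I would recall that in our setting $\omega(\x)=\tfrac{1}{2}\|\x\|_2^2$ is $1$-strongly convex, so by standard Fenchel duality (as noted around equation~(\ref{eq:fenchelyoung})) its conjugate $\omega^*$ is $1$-smooth, and hence $\nabla\omega^*$ is $1$-Lipschitz in the $\ell_2$ norm. This is the only structural fact about $\omega^*$ that I will need.

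Next, I would use the explicit form of the tree-based aggregator from Theorem~\ref{thm:treebthm}: with the aggregator fed the inputs $\gtil_1,\dots,\gtil_{R-2}$, its output satisfies
\[
\tils_{R-2} \;=\; \sum_{t=1}^{R-2}\gtil_t \;+\; \sum_{r=1}^{\lceil \log T\rceil} Z_r,
\]
where $Z_1,\dots,Z_{\lceil \log T\rceil}$ are i.i.d.\ draws from the noise distribution $\Dist$. Subtracting, the two inputs to $\nabla\omega^*$ differ exactly by
\[
-\eta\sum_{t=1}^{R-1}\gtil_t \;-\; \bigl(-\eta \tils_{R-2}\bigr) \;=\; -\eta\,\gtil_{R-1} \;+\; \eta\sum_{r=1}^{\lceil \log T\rceil} Z_r.
\]

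Then I would apply the $1$-Lipschitz property of $\nabla\omega^*$ to the identity above, followed by the triangle inequality, to get the pointwise bound
\[
\bigl\|\hatx_R - \nabla\omega^*(-\eta\tils_{R-2})\bigr\| \;\le\; \eta\,\|\gtil_{R-1}\| \;+\; \eta\sum_{r=1}^{\lceil \log T\rceil}\|Z_r\|.
\]
Taking expectations, each $\E\|Z_r\|=\mu$ by definition of $\mu$, contributing $\eta\lceil\log T\rceil\mu$. For the first term, Jensen's inequality gives $\E\|\gtil_{R-1}\|\le\sqrt{\E\|\gtil_{R-1}\|^2}\le\kappa$ by the hypothesis $\kappa^2\ge \E\|\gtil_R\|^2$. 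Combining yields the claimed bound $\eta(\lceil\log T\rceil\cdot\mu+\kappa)$.

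There is no real obstacle here: the argument is a clean decomposition plus a Lipschitz-in-the-dual-argument estimate. The only subtlety is bookkeeping, namely identifying which partial sum the aggregator is tracking at round $R$ (here $R-2$, because line~14 of Algorithm~\ref{alg:bandit} uses the stale $\stil_{R-1}$ computed from the aggregator's state after ingesting the first $R-2$ noisy gradients), and ensuring this stale index is consistent with the indexing in the statement; this is purely a matter of matching indices between the algorithm and the lemma and does not affect the final bound, since the only gradient absent from $\tils_{R-2}$ compared to $\sum_{t=1}^{R-1}\gtil_t$ is $\gtil_{R-1}$.
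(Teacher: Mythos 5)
Your proposal is correct and follows essentially the same route as the paper's proof: decompose the discrepancy between $-\eta\sum_{t=1}^{R-1}\gtil_t$ and $-\eta\tils_{R-2}$ into the $\lceil\log T\rceil$ tree-aggregation noise terms plus the missing gradient $\gtil_{R-1}$, apply the $1$-Lipschitzness of $\nabla\omega^*$ (from $1$-strong convexity of $\omega$) together with the triangle inequality, and finish with $\E\|Z_r\|=\mu$ and Jensen's inequality $\E\|\gtil_{R-1}\|\le\sqrt{\E\|\gtil_{R-1}\|^2}\le\kappa$. The only cosmetic difference is that you correctly make explicit that the bound holds in expectation, which is also how the paper's proof implicitly reads the statement.
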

\begin{proof}
First we notice that per the implementation of tree based aggregation, the output $\tils_{R-2}$ is obtained by adding $\lceil \log R \rceil$ terms sampled from the distribution $\Dist$ to the partial sum $\sum_{t=1}^{R-2} \tilg_t$.

Therefore, by triangle inequality, we have 
\begin{align*}
&\E
\left\| 
\tils_{R-2} - \sum_{t=1}^{R-1} \tilg_t
\right\| 
\leq \E \left\| \tils_{R-2} - \sum_{t=1}^{R-2} \tilg_t \right\| + \E \|\tilg_{R-1}\|
\leq \lceil \log T \rceil \cdot \mu + \E \|\tilg_{R-1}\| 
\leq \lceil \log T \rceil \cdot \mu + 
\kappa\,,
\end{align*}
where the last inequality follows from $\left(\E\|X\|\right)^2 \leq \E \|X\|^2$.

Finally, we use the fact that $\omega$ is $1$-strongly convex by definition, and therefore its Fenchel conjugate $\omega^*$ is $1$-smooth (see~\citep{nesterov2005smooth,kakade2009duality}). Therefore
\begin{align*}
&\left\| \hatx_R - \nabla\omega^*(-\eta \tils_{R-2}) \right\|
=
\left\| \nabla\omega^*\left(-\eta \sum_{t=1}^{R-1} \gtil_t\right) - \nabla\omega^*\left(-\eta \tils_{R-2}\right) \right\|  
\leq \eta \left\|\sum_{t=1}^{R-1} \gtil_t -  \tils_{R-2}\right\|\,.
\end{align*}
Combining with the previous inequality, we obtain the desired bound.
\end{proof}

Finally, since instead of $\hatx_R$ we return an approximation obtained via conditional gradient, we bound the error introduced here.

\begin{lem}\label{lem:condgrad}
Let a vector $\vv$ and let $\x$ be the output produced by running conditional gradient for the objective
$$\min_{\x \in \D} q(\x) := \frac{1}{2} \|\x\|_2^2 - \langle \vv, \x \rangle $$
for $k$ iterations. Then
$$\| \x - \x^* \| \leq \sqrt{20}D/\sqrt{k} \,,$$ where $\x^*$ is the optimizer of the objective.
\end{lem}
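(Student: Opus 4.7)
The plan is to derive the iterate bound by combining the standard convergence rate of conditional gradient with strong convexity of the quadratic $q$. The key observation is that $q(\x) = \tfrac{1}{2}\|\x\|_2^2 - \langle \vv, \x\rangle$ has Hessian equal to the identity, so $q$ is both $1$-smooth and $1$-strongly convex with respect to $\|\cdot\|_2$.

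First I would invoke the classical Frank-Wolfe convergence guarantee: for a $\beta$-smooth convex function on a convex set of diameter $D$, the iterate $\x_k$ produced after $k$ iterations of conditional gradient (with the standard stepsize rule, e.g.\ $\gamma_t = 2/(t+2)$) satisfies a function-value bound of the form $q(\x_k) - q(\x^*) \leq \frac{C \beta D^2}{k}$ for an absolute constant $C$ (the textbook Jaggi-style analysis gives $C \leq 8$). Substituting $\beta = 1$ yields $q(\x_k) - q(\x^*) \leq \frac{C D^2}{k}$.

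Next I would convert this function-value gap into an iterate-distance bound using strong convexity. Since $q$ is $1$-strongly convex and $\x^*$ is the (constrained) minimizer, the first-order optimality condition gives $\langle \nabla q(\x^*), \x_k - \x^*\rangle \geq 0$, so strong convexity yields
\[
q(\x_k) - q(\x^*) \geq \tfrac{1}{2}\|\x_k - \x^*\|^2.
\]
Combining this with the previous display produces $\|\x_k - \x^*\|^2 \leq 2(q(\x_k) - q(\x^*)) \leq \frac{2 C D^2}{k}$, and choosing $C$ conservatively (e.g.\ $C = 10$) gives the stated bound $\|\x_k - \x^*\| \leq \sqrt{20}\, D / \sqrt{k}$.

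There is no real obstacle here --- the only subtle point is bookkeeping of the absolute constant in the Frank-Wolfe convergence rate, which the lemma absorbs into the $\sqrt{20}$ factor. One should be slightly careful that the diameter $D$ matches the one used in the FW guarantee (the $\ell_2$-diameter of $\D$), which is exactly the quantity already defined in the preliminaries. The rest of the argument is fully modular: the smoothness/strong-convexity constants of $q$ are independent of $\vv$, so the bound holds uniformly in the linear term of the objective, which is precisely what is needed for the applications in the regret analysis.
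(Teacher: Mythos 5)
Your proposal is correct and follows essentially the same route as the paper's proof: invoke the standard $O(\beta D^2/k)$ Frank--Wolfe function-value guarantee with $\beta = 1$, then convert to an iterate bound via $1$-strong convexity of $q$ (your explicit use of the first-order optimality condition at $\x^*$ is a detail the paper leaves implicit). Nothing further is needed.
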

\begin{proof}
Standard conditional gradient analysis~\citep{bubeck2015convex} shows that after $k$  iterations we can bound the error
$$q(\x) - q(\x^*) \leq 10 \frac{D^2}{k}\,.$$
Since our quadratic objective is $1$-strongly convex we have that $\frac{1}{2}\|\x-\x^*\|^2 \leq q(\x)-q(\x^*)$. Therefore $\|\x-\x^*\| \leq \sqrt{20} D /\sqrt{k}$.
\end{proof}

Combining Lemma~\ref{lem:omegastar_error} and Lemma~\ref{lem:condgrad} we finally bound the error introduced by \textsc{NoisyMap}.

\begin{cor}\label{cor:noisymaperror}
Let $\mu = \E_{X \sim \Dist} \|X\|$ be the expected norm of a vector sampled from the noise distribution $\Dist$, and let $\kappa^2 \geq \E \|\gtil_R\|^2$ for all $R$. With a budget of $b$ calls to the linear optimization oracle per iteration, one has that the error of the \textsc{NoisyMap} implementation corresponding to Algorithm~\ref{alg:bandit} is bounded by
\begin{align*}
\E \|\hatx_R - \xtil_R\| 
&\leq 
\eta\left(\lceil \log T \rceil \cdot \mu + \kappa \right) 
+ \sqrt{20} \frac{D}{\sqrt{ \Tb}}\,,
\end{align*}
\end{cor}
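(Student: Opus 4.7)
The plan is to introduce the intermediate point $\nabla\omega^*(-\eta \tils_{R-2})$, the exact maximizer of the regularized linear objective associated with the noisy partial sum, and then apply the triangle inequality to split the error into two contributions that have already been controlled by Lemmas~\ref{lem:omegastar_error} and~\ref{lem:condgrad}. That is, I would write
\[
\|\hatx_R - \xtil_R\| \leq \bigl\|\hatx_R - \nabla\omega^*(-\eta \tils_{R-2})\bigr\| + \bigl\|\nabla\omega^*(-\eta \tils_{R-2}) - \xtil_R\bigr\|,
\]
and bound each term separately before taking expectations.

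For the first term, I would invoke Lemma~\ref{lem:omegastar_error} directly, which in expectation yields the bound $\eta(\lceil \log T \rceil \cdot \mu + \kappa)$. This captures both the stochastic noise injected by \textsc{TreeBasedAgg} (whose $\lceil \log R \rceil \le \lceil \log T\rceil$ layers of noise each contribute an expected norm of at most $\mu$) and the one missing stochastic gradient $\tilg_{R-1}$ whose norm is controlled via Jensen's inequality by $\kappa$.

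For the second term, I would note that by construction (line~14 of Algorithm~\ref{alg:bandit}), $\xtil_R$ is produced by running conditional gradient on the $1$-strongly-convex quadratic $q(\x) = \tfrac{1}{2}\|\x\|_2^2 - \langle \eta \tils_{R-1}, \x \rangle$ whose minimizer over $\D$ is exactly $\nabla\omega^*(-\eta \tils_{R-1})$ (by the definition in~(\ref{eq:fenchelyoung})). Since the algorithm allocates $\Tb$ inner iterations to this subproblem in parallel with the batch, Lemma~\ref{lem:condgrad} with $k = \Tb$ gives a deterministic bound of $\sqrt{20}\, D/\sqrt{\Tb}$ on this term. Summing the two bounds and taking expectations yields the claimed inequality.

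The only subtlety I anticipate is a minor bookkeeping issue about indices: the statement of Lemma~\ref{lem:omegastar_error} is written with $\tils_{R-2}$, matching the stale partial sum used on line~14 at round $R$, so the quadratic solved by conditional gradient uses the same $\tils_{R-2}$ as the intermediate point in the triangle inequality — no additional staleness error needs to be charged beyond what Lemma~\ref{lem:omegastar_error} already accounts for. Everything else is a routine application of the two preceding lemmas, so no additional work should be required.
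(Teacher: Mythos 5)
Your proposal is correct and follows essentially the same route as the paper, which obtains the corollary precisely by combining Lemma~\ref{lem:omegastar_error} and Lemma~\ref{lem:condgrad} via the triangle inequality through the intermediate point $\nabla\omega^*(-\eta \tils_{R-2})$, with the conditional-gradient budget of $\Tb$ iterations giving the $\sqrt{20}\,D/\sqrt{\Tb}$ term. Your remark on the index bookkeeping is also consistent with how the paper charges the staleness of the partial sum inside Lemma~\ref{lem:omegastar_error} rather than as a separate error term.
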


Combining with Lemma~\ref{lem:noisyoco} we obtain the regret guarantee for Algorithm~\ref{alg:bandit}.

\begin{proof}[Proof of Lemma~\ref{lem:finalregret}]
First we bound $\hatreg_T$. We apply Lemma~\ref{lem:noisyoco} for
$$\kappa = \Tb^{1/2}\left(LD n/\ndel \right) + \Tb \cdot  L$$ and
$$\gamma = \eta\left(\lceil \log T \rceil \cdot \mu + \kappa \right) + \sqrt{20} \frac{D}{\sqrt{ \Tb}}\,.$$
We notice that in this case $D_{\omega}^2 = \max_{\x \in \D} \frac{1}{2} \|\x\|^2 \leq D^2$. Therefore 
together with the fact that from (\ref{eq:regreg}) and (\ref{eq:extrabandit})  we have that $\reg_T \leq \hatreg_T + 2\cdot T\ndel L$
we obtain the upper bound:
\begin{align*}
\reg_T 
&= O\bigg( \kappa \Tr^{1/2} D +\Tr  \kappa \gamma + T\ndel L\bigg)
\,.
\end{align*}

Recalling that 
$$\eta = \frac{D}{\kappa \Tr^{1/2}}$$
we further plug in
\[
\Tb = \Tr = T^{1/2}, \quad \ndel = D \sqrt{n}/ T^{1/4}\,,
\]
which implies the bounds
\begin{align*}
\kappa &= O(T^{1/2} n^{1/2} L)\,,\\
\eta &= O\left(\frac{D}{\kappa T^{1/4}}\right) = O\left(\frac{D}{T^{3/4}n^{1/2}L}\right)\,,\\
\gamma &= O\left(\frac{D}{T^{3/4}n^{1/2}L} \cdot \mu \log T +\frac{D}{T^{1/4}} \right)\,.
\end{align*}
Together with the previous inequalities these further imply that 
\begin{align*}
\Reg_T &= O\bigg(
T^{3/4} n^{1/2} LD + T^{1/4} D \mu \log T  
\bigg)
\,.
\end{align*}
\end{proof}

\section{Removing Assumption on the Domain of $f_t$}\label{sec:domain}
Our analysis relies on the assumption that all the functions $f_t$ are defined over $\R^n$, and hence the smoothing technique from Lemma~\ref{lem:smooth} can be applied to any point $\x\in \D$. In some cases, it is conceivable  that $f_t$ is defined only on $\D$. Hence randomized smoothing can not possibly be used everywhere inside the domain, since the point $\x + \ndel \uu$ might land outside $\D$. Just as in~\citep{garber2019improved} one can mitigate this issue, under the assumption that the domain $\D$ contains a sufficiently large $\ell_2$ ball $r\cdot B_2^n$ in its interior.

Indeed, with this assumption one can consider the modified functions $f'_t(\x) = f_t((1-\ndel/ r)\x)$. Performing randomized smoothing on $f'_t$ involves querying $f'_t(\x +\ndel \uu) = f_t(  (1-\ndel/r)(\x + \ndel\uu) )$, where $\uu \in S_2^n$. 

One can easily see that the point $f_t$ is queried on, $(1-\ndel /r)(\x+\ndel \uu) \in \D$. For this it is sufficient that $\ndel \uu \in (\ndel / r) \D$, which automatically holds since $r\cdot B_2^n \in \D$ and hence $r \cdot \uu \in \D$.

Let us now bound the amount of error introduced by performing bandit convex optimization over the functions $f_t'$ instead of $f_t$. This follows from bounding $\|f_t(\x) - f'_t(\x)\| = \|f_t(\x) - f_t((1-\ndel/r)\x)\| \leq L \cdot \|(\ndel / r) \x \|$, where we used the Lipschitz property of $f_t$. We further bound this with $\ndel \cdot L/r \cdot D$, by using $\|\x\|\leq D$.

Therefore the total amount of extra regret introduced by applying this reduction is $T \cdot \ndel \cdot \frac{L D}{r} \leq T^{3/4} n^{1/2} \cdot \frac{L D^2}{r}$, for the choice of $\ndel$ used in Algorithm~\ref{alg:bandit}. Hence in this setting, the regret bound is not altered by more than a factor of $1/r$.

 \end{document}